\documentclass[lettersize,journal]{IEEEtran}

\usepackage[T1]{fontenc}
\usepackage{graphicx}
\usepackage{amsmath}
\usepackage{amssymb}
\usepackage{amsthm}
\usepackage{booktabs}
\usepackage{url}
\usepackage{xspace}
\usepackage[subtle]{savetrees}
\graphicspath{{./figures/}{../images/}}

\newtheorem{definition}{Definition}[section]
\newtheorem{assumption}{Assumption}[section]
\newtheorem{theorem}{Theorem}[section]

\providecommand{\texorpdfstring}[2]{#1}
\newcommand{\william}[1]{}
\newcommand{\ravi}[1]{}
\newcommand{\ravibot}[1]{}
\newcommand{\raviim}[1]{}
\newcommand{\rev}[1]{#1}
\DeclareRobustCommand{\orangechange}[1]{#1}
\newcommand{\MDP}{\textsc{MDP}\xspace}

\newcommand{\IPOMDP}{\textsc{IPOMDP}\xspace}
\newcommand{\IPOMDPs}{\textsc{IPOMDP}s\xspace}
\newcommand{\POMDP}{\textsc{POMDP}\xspace}

\newcommand{\PCIS}{\textsc{PCIS}\xspace}
\newcommand{\supplementname}{appendix\xspace}

\begin{document}

\title{Interval POMDP Shielding for \\ Imperfect-Perception Agents}

\author{William~Scarbro and Ravi~Mangal%
\thanks{This is the extended version, including appendices, of a paper presented at the International Conference on Embedded Software (EMSOFT) 2026.}%
\thanks{The authors are with the Department of Computer Science, Colorado State University, Fort Collins, Colorado, USA.}%
}

\markboth{IEEE Transactions on Computer-Aided Design of Integrated Circuits and Systems,~Vol.~XX, No.~XX, 2026 (Extended Version)}%
{Scarbro and Mangal: Interval POMDP Shielding for Imperfect-Perception Agents}

\maketitle
\IEEEpubid{0278-0070~\copyright~2026 IEEE}

\begin{abstract}
  Autonomous systems that rely on learned perception can make unsafe decisions when sensor readings are misclassified. We study shielding for this setting: given a proposed action, a shield blocks actions that could violate safety. We consider the common case where system dynamics are known but perception uncertainty must be estimated from finite labeled data. From these data we build confidence intervals for the probabilities of perception outcomes and use them to model the system as a finite Interval Partially Observable Markov Decision Process with discrete states and actions. We then propose an algorithm to compute a conservative set of beliefs over the underlying state that is consistent with the observations seen so far. 
  This enables us to construct a runtime shield that comes with a finite-horizon guarantee: with high probability over the training data, if the true perception uncertainty rates lie within the learned intervals, then every action admitted by the shield satisfies a stated lower bound on safety. Experiments on four case studies show that our shielding approach (and variants derived from it) improves the safety of the system over state-of-the-art baselines.

\end{abstract}

\begin{IEEEkeywords}
Interval Markov models, partially observable Markov decision processes (POMDPs), perception uncertainty, runtime shielding.
\end{IEEEkeywords}

\section{Introduction}
The computational capabilities unlocked by neural networks have made it feasible to build autonomous agents that use learned components to interact with their environments while pursuing complex goals. As one example, consider an autonomous aircraft taxiing system that senses the environment through a camera and must follow the runway centerline in TaxiNet-style settings \cite{xtaxinet,kadron2022case,fremont2020formal}. The agent \emph{senses} the environment through sensors, \emph{perceives} the underlying state of the system from those sensor readings via neural components, \emph{chooses} an action based on the perceived state using symbolic or neural decision logic, and \emph{executes} that action to update the system state. When the perception component misclassifies the underlying state from the sensor readings, the agent acts on incorrect information. We refer to this phenomenon as \emph{imperfect-perception}.

Autonomous agents with imperfect-perception are often deployed in safety-critical settings, so one would like all executed actions to come with formal safety guarantees. Shielding is a promising approach: a shield restricts the actions available to the agent so that safety constraints are preserved at runtime \cite{alshiekh2018safe,jansen2020safe,konighofer2020shielding}. Shields are also compatible with reinforcement-learning(RL)-based control, where they can improve the runtime safety of a deployed RL policy while preserving as much task performance as possible \cite{alshiekh2018safe,jansen2020safe,konighofer2020shielding}.

Constructing a shield with a meaningful safety guarantee requires an accurate model of perception uncertainty. Models of perception uncertainty take many forms, but at their heart they capture the probabilistic relationship between observations received by the controller and the true state of the system. When these observations are produced using a Deep Neural Network (DNN) component the perception uncertainty model acts as a probabilistic black box around the environment producing inputs for the DNN perceptor and the DNN perceptor itself. When constructing this probabilistic model, classical uncertainty models in control theory are often a poor fit because neural perception uncertainty need not follow simple parametric forms such as Gaussian noise \cite{kalman1960new}. The support-based shield of Carr \emph{et al.}, which discards emission probabilities and only retains perception relationships, is sound \cite{carr2023pomdpshielding}, but we show it may be too conservative to yield usable shields in problems of interest.

\IEEEpubidadjcol One central challenge to constructing a tenable probabilistic uncertainty model is the fact that the perception environment experienced by an agent will never exactly match the data from which the uncertainty model was built. This change manifests as a distributional shift in model probabilities. A model's design determines how robust it is to different kinds of distributional shift. One type of shift results from a change in the control policy from when the data was collected to when the agent is deployed. This changes the distribution of states the agent visits \cite{ross2011reduction}, but it does not change the probability associated with producing an observation from a particular state, called an \textit{emission probability}. There are several uncertainty modeling techniques which are not robust to \textit{control-based shift}. One is conformal wrappers, which modify the neural perceptor to be set estimators with a guaranteed lower bound on the probability of the true outcome being contained within the set \cite{angelopoulos2023conformal,vovk2005algorithmic} and have been used for shielding \cite{ScarbroImrie25}. This guarantee relies on exchangeability, a property that can be violated by control-based shift without additional correction \cite{tibshirani2019conformal,gibbs2021adaptive}. To guard against control-based shift one can isolate the individual per-state emission probabilities because these probabilities are independent of the true state distribution experienced by the agent. However, this method is not robust to a more pernicious type of distributional shift, \textit{deployment shift}, which changes the per-state emission probabilities themselves. Deployment shift is inherent to our problem setting because the perception uncertainty model is learned from finite data, guaranteeing the sampled probabilities will not exactly match their deployment realizations \cite{ovadia2019can}. Our goal is to build a model robust to control-based shift that guards against deployment shift to the extent that is feasible.

We begin with a Partially Observable Markov Decision Process (\POMDP) model of the agent (or system) with a discrete state and action spaces, which is an appropriate base model in this setting as it describes a control system with limited access to state data \cite{kaelbling1998planning}. In this paper we work throughout with finite discrete models: discrete latent states, discrete actions, and a finite set of discrete observations output by the perception stack. Specifically, a POMDP composes a \textit{dynamics model} of how the state space evolves, with a \textit{perception uncertainty model} of per-state emission probabilities \cite{pasareanu2023closedloop,schaefers2026perceptionbased}. We address a common regime where the system dynamics are modeled exactly, but the perception uncertainty model is built from finite labeled data. Our method constructs confidence bounds on the emission probabilities and uses them to define an Interval Partially Observable Markov decision process (\IPOMDP), whose perception uncertainty model uses intervals for these emission probabilities rather than fixed values.

This follows the broader robust/imprecise probabilistic modeling perspective of bounded-parameter MDPs and imprecise hidden Markov models \cite{givan2000bounded,nilim2005robust,zaffalon2009imprecise}. These intervals make the model robust to deployment shift because they anticipate the realized emission probabilities varying from scalar expectations. Like POMDPs, the perception uncertainty model in the IPOMDP is independent of the control policy used to collect the data for estimating emission probabilities/intervals, but unlike the base POMDP model, it also carries a Probably Approximately Correct (PAC)-style model-correctness guarantee: with probability at least $\lambda$, the true perception uncertainty model lies in the admissible set of perception uncertainty models induced by the interval bounds \cite{clopper1934use}.

Given this interval perception model, we develop a shielding method that starts from a perfect-perception shield, where the agent is assumed to have access to the true state of the system, and lifts it to the imperfect-perception setting. The underlying perfect-perception shield may come from Probabilistically Controlled Invariant Sets (\PCIS) \cite{gao2021pcis}, barrier-certificate methods \cite{ames2019control}, or other state-space shield-synthesis procedures. Starting from such a shield is deliberate: computing a shield directly in belief space under interval observations scales poorly because the belief update is set-valued and normalized, producing high-dimensional nonconvex reachable sets \cite{noack2008convexsets}. Our method therefore propagates over-approximations of reachable beliefs and tests action admissibility to the shield through worst-case linear objectives over that envelope, solved with linear programming. The resulting design is modular by allowing various perfect-perception shield designs, conservative by construction, and separates dynamics-level safety shielding from reasoning about perception uncertainty.

We evaluate the method on four benchmark domains: TaxiNet, Obstacle, CartPole, and Refuel. The experiments address two main empirical questions. First, across these benchmarks, how do different shields compare in safety and usability under imperfect-perception? To answer this, we compare our proposed shield with two under-approximate variants of our shield construction and two other baselines including the support-based shield of Carr \emph{et al.} \cite{carr2023pomdpshielding}. Second, how conservative is the envelope abstraction used by the proposed method? To answer this, we compare the envelope's over-approximation of reachable beliefs against forward-sampled under-approximations. Together, these experiments show that interval belief-envelope shielding is most useful in partially observable regimes where shields constructed assuming point emission probabilities are too optimistic and the support-based shield becomes too coarse, while also revealing benchmark-dependent computational limits. \rev{These results show a scalability tradeoff: the sound over-approximate shield construction is tractable on smaller \IPOMDP{}s but does not scale to the largest models, while a sampling-based under-approximate variant inspired by the same construction scales further but is not given the same soundness guarantee here.}

\subsection*{Contributions}
We make the following contributions:
\begin{enumerate}
\item An \IPOMDP formulation of the imperfect-perception system under study, including finite-data observation uncertainty and the associated shielding problem for finite discrete models.
\item A runtime shield construction for \IPOMDP{}s that lifts a perfect-perception shield to the imperfect-perception setting.
\item \rev{A sound but expensive linear-programming/linear-fractional-programming-based method for over-approximating reachable beliefs in an \IPOMDP, as well as a cheap, under-approximate, sampling-based variant, and a method for using those belief envelopes for conservative action admissibility.}
\item Empirical evidence across four benchmark domains (TaxiNet, Obstacle, CartPole, and Refuel), including a comparison among Observation, Single-Belief, Envelope, Fwd-Sampling, and the support-based shield, and a quantitative study of envelope conservativeness.
\item A probabilistic-automata/categorical interpretation of the construction that explains the convex abstraction used for interval belief reasoning; available in the \supplementname.
\end{enumerate}

\section{Preliminaries}
\label{sec:prelims}
\subsection{System Model}
We begin with a discrete Markov Decision Process (MDP) model of the agent/system,
\[
\mathcal{M}=(S,A,T),
\]
where $S$ is a finite state set, $A$ is a finite action set, and $T:S\times A\times S\to[0,1]$ is the transition kernel. At time $t$, the system is in  state $s_t\in S$, the controller chooses action $a_t\in A$, and the next state is sampled from $T(s_t,a_t,\cdot)$.
\rev{For a fixed action $a$, we write $T_a$ for the $|S|\times |S|$ transition matrix with entries $(T_a)_{s,s'}=T(s,a,s')$.}
\paragraph*{Perfect perception}
Under perfect perception, the controller observes $s_t$ exactly before selecting $a_t$. In that setting the decision problem is an MDP over the true state space $S$.

\paragraph*{Imperfect perception and raw sensor inputs}
In learned autonomy, the controller typically does not observe $s_t$ directly. Instead, a perception component receives a raw sensor input $y_t\in\mathcal{Y}$, where $\mathcal{Y}$ may be high-dimensional, for example an image space. Although $y_t$ is the direct result of $s_t$, the relationship which produces $y_t$ involves environmental conditions and therefore is extremely difficult to model directly. A learned perception component, such as a deep neural network, then converts $y_t$ into a finite symbolic output. This output is what the controller \textit{observes} in the system model, and forms the basis of the belief of state space occupation.

\subsection{System Model with Probabilistic Abstraction}
Let $O$ be a finite set of discrete observations. Rather than model the raw sensor domain $\mathcal{Y}$ directly, we apply previous work \cite{pasareanu2023closedloop, schaefers2026perceptionbased} to summarize the perception pipeline by a conditional distribution from the true state ($s_t$) of the system at time $t$ to discrete perception outputs,
\[
Z(o\mid s)=\Pr[\text{perception outputs }o \mid s_t=s], \qquad o\in O.
\]
The type $Z: S \times O \rightarrow [0,1]$ fits a POMDP-style model of the imperfect-perception system: the true state evolves according to $T$, remains hidden from the controller, and the controller instead receives observations drawn from $Z(\cdot \mid s)$. The abstraction avoids carrying the full high-dimensional sensor domain through the system model: all sensor complexity is compressed into the finite perception uncertainty model $Z$, which records only the probabilities of the perception outputs relevant to decision making. Throughout the paper we stay in this finite discrete setting, so the shield reasons over discrete latent states, discrete actions, and discrete observation symbols rather than continuous sensor values directly.
This probabilistic abstraction is close in spirit to recent approaches that also replace the raw sensor domain by a compact stochastic interface between true state and downstream reasoning. Sch\"afers \emph{et al.} use a learned perception uncertainty model to produce probabilistic state summaries for POMDP belief updates \cite{schaefers2026perceptionbased}, while P\u{a}s\u{a}reanu \emph{et al.} use confusion-matrix-derived probabilistic abstractions to enable closed-loop probabilistic analysis of vision-based autonomy \cite{pasareanu2023closedloop}. Our work shares this abstraction viewpoint, but then develops an interval-valued version of $Z$ to support conservative shielding under finite-data uncertainty.

\paragraph*{POMDP}
With a fixed (perception uncertainty model) $Z$, the imperfect-perception system is a POMDP
\[
\mathcal{P}=(S,A,O,T,Z).
\]
We write $\Delta(S)$ for the probability simplex over $S$. For a history $h_t$ of action, observation pairs, a belief $b_t \in \Delta(S)$ is the posterior distribution over the true state given that history, i.e., $$b_t(s) = \Pr [s_t = s | h_t].$$ Given prior belief $b_t\in\Delta(S)$, action $a_t$, and received observation $o_{t+1}$, the controller updates its belief by Bayesian filtering \cite{kaelbling1998planning}. The state still evolves according to $T$; the uncertainty comes from not observing $s_t$ directly and instead receiving only $o_t$ through the perception uncertainty model $Z$.

\subsection{Interval POMDP System Model}
In practice we must estimate the true $Z$ from finite data. While one can form point estimates of the emission probabilities, finite-sample error can make those estimates arbitrarily misleading, especially for rarely observed state-observation pairs. We therefore replace the single perception uncertainty model by interval bounds $Z^-,Z^+:S\times O\to[0,1]$ \cite{pasareanu2023closedloop,cleaveland2025conservative} and define the admissible set of perception uncertainty models
\[
\mathcal{Z}=\left\{
Z~\middle|~
\begin{aligned}
&\forall s,o:\; Z^-(o\mid s)\le Z(o\mid s)\le Z^+(o\mid s),\\
&\forall s:\; \sum_o Z(o\mid s)=1.
\end{aligned}
\right\}.
\]

\begin{definition}[\IPOMDP]
An \IPOMDP is a tuple
\[
\mathcal{I}=(S,A,O,T,Z^-,Z^+),
\]
where $(S,A,T)$ is the initial MDP system and $\mathcal{Z}$ is the admissible set of perception uncertainty models defined above.
\end{definition}

Given history $h_t=(a_0,o_1,\dots,a_{t-1},o_t)$ and prior belief $b_0\in\Delta(S)$, each $Z\in\mathcal{Z}$ induces a posterior belief $b_t^Z$. Over all elements of $\mathcal{Z}$, this forms a set of beliefs:
\[
\mathcal{B}_t(b_0,h_t)=\{b_t^Z: Z\in\mathcal{Z}\}.
\]
This definition fixes a single unknown $Z$ across the whole history $h_t$. For shielding we use a more conservative semantics that allows the perception uncertainty model to vary within $\mathcal{Z}$ at each step; as a result our belief-envelope construction is a superset of the belief set produced by the fixed semantics. We adopt this stronger per-step-varying semantics because the safety guarantee should remain valid for any admissible realization of the perception error at each step, rather than depending on one fixed but unknown kernel being sampled once and then held constant for the whole trajectory. There is also a computational reason for this: the deployed method propagates a compact belief envelope at runtime, which scales much better than recomputing the exact shield from the entire action-observation history at each step.

In an ordinary POMDP, a single posterior belief is updated by Bayesian filtering. In an \IPOMDP, however, interval uncertainty in the perception uncertainty model means that a whole set of posterior beliefs can be consistent with the same history, so we will need a separate algorithm to propagate a conservative belief envelope over a history. 
\paragraph*{PAC guarantee}
When the interval bounds are constructed from finite labeled data using confidence intervals, the resulting \IPOMDP carries a dataset-level model-correctness guarantee. In particular, if the true perception uncertainty model is denoted by $Z^\ast$, then our construction ensures
\[
\Pr[Z^\ast\in\mathcal{Z}] \ge \lambda,
\]
for a user-chosen confidence level $\lambda$. Section~\ref{sec:interval-construction-and-pac} gives the concrete Clopper--Pearson construction and the corresponding union-bound argument.

\subsection{Shielding}
Our objective is to lift a state-based (perfect-perception) safety shield to the imperfect-perception setting rather than synthesize an entirely new shield.

\begin{definition}[Perfect-Perception Shield]
A perfect-perception shield is a map $\Omega:S\to 2^A$ assigning each true state the set of admissible actions.
\end{definition}

The shield is intended to enforce a safety specification defined by a designated safe set of states specified logically. It removes actions whose probability of leaving that safe set exceeds the chosen guarantee notion. In other words, we consider safety specifications that can be expressed using the safety fragment of Probabilistic Computational Tree Logic (PCTL).

This definition provides a minimal interface -- a map from true states to a set of actions -- and is agnostic to how the shield is synthesized. It may come from reachability analysis, barrier-certificate methods, controlled invariant sets, or any other state-space shield construction.

\begin{definition}[Imperfect-Perception Shield]
An imperfect-perception shield is a map $\Xi:\bigcup_{t\ge 0}\mathcal{H}_t\to 2^A$ from finite action-observation histories to sets of actions.
\end{definition}

Here $\mathcal{H}_t$ denotes the set of length-$t$ action-observation histories, e.g. $h_t = (a_0,o_1,\dots,a_{t-1},o_t) \in (A \times O)^t$. This is the trajectory-level shielding object: given the history available at time $t$, it returns the actions permitted at that time.

A runtime shield, by contrast, is the runtime mechanism used to realize such a map. It maintains an internal summary of state uncertainty, updates that summary after each action-observation pair, and computes the next permitted-action set. In this way, the runtime shield induces an imperfect-perception shield over histories. This distinction matters because although the shield itself is fundamentally history-based, our implementation is designed to avoid storing and recomputing over the full history explicitly.

To define the runtime shield, first define the action-safety indicator from the perfect-perception shield
\[
\chi_\Omega(s,a)=
\begin{cases}
1, & a\in\Omega(s),\\
0, & a\notin\Omega(s).
\end{cases}
\]

\rev{\noindent For each action $a$, let
\[
\phi_a(b)=\sum_{s\in S}b(s)\chi_\Omega(s,a),
\]
so $\phi_a(b)$ is the probability, under belief $b$, that action $a$ is allowed by the perfect-perception shield.}

For a history $h_t$, initial belief $b_0$, and threshold $\beta\in[0,1]$, the exact lifted shield induced by $\Omega$ is
\begin{equation}
\label{eq:runtime-shield}
\rev{\Xi^\star(h_t)=\left\{a\in A:\inf_{b\in\mathcal{B}_t(b_0,h_t)}\phi_a(b)\ge\beta\right\}.}
\end{equation}
That is, an action is permitted only if it remains $\Omega$-admissible with probability at least $\beta$ under every belief in the interval-POMDP reachable-belief set $\mathcal{B}_t(b_0,h_t)$. Our method approximates this exact object conservatively using belief envelopes that over-approximate that reachable-belief set.

\paragraph*{PCIS construction}
We use Probabilistically Controlled Invariant Sets (\PCIS) as the main source of perfect-perception shields because they provide a composable stochastic invariance certificate \cite{gao2021pcis}. In that framework, an $N$-step $\epsilon$-PCIS is a set of states $C$ such that each state in $C$ admits a policy that keeps the trajectory inside $C$ for $N$ steps with probability at least $\epsilon$; the infinite-horizon variant requires this for all future times.

To obtain a perfect-perception shield from a certified invariant set $C\subseteq S$, we define
\begin{equation}
\label{eq:omega-pcis}
\Omega(s)=\left\{a\in A:\sum_{s'\in C} T(s,a,s')\ge \gamma\right\},
\end{equation}
with threshold $\gamma\in(0,1]$. Thus, PCIS supplies the invariant core $C$, and $\Omega$ is the induced state-action shield under the known dynamics kernel $T$. The remainder of the paper only requires the interface $\Omega:S\to 2^A$, so the runtime lifting method is compatible with other perfect-perception shield constructions as well.

\section{Method}
\label{sec:method}
\orangechange{Because $\Omega$ is synthesized offline in true-state space, the runtime shield maintains a conservative belief set consistent with the action-observation history and admits only actions safe for every belief in that set. We make this tractable by reducing belief-envelope propagation to a linear optimization problem over a template domain, yielding an LP-solvable over-approximation.}

Our goal is to propagate not just one belief but an entire belief envelope through an action-observation update, so we begin with the algorithm to propagate a single belief, the ordinary POMDP forward recursion, equivalently the Bayes-filter update \cite{rabiner1989tutorial}. Given a prior belief $b$ and action-observation pair $(a,o)$, the unnormalized posterior mass is
\[
u_{s'} = Z(o\mid s')\sum_{s\in S} T(s,a,s')\,b(s),
\]
and the posterior belief is obtained by normalizing $u$. Equivalently, in vector form,
\[
y=T_a^\top b,\qquad u_s = y_s Z(o\mid s),\qquad b'_s=\frac{u_s}{\sum_j u_j}.
\]
\orangechange{Our envelope update follows exactly this recursion, but simultaneously for every prior belief in $\widehat{\mathcal{B}}_t$ and every perception uncertainty model in $\mathcal{Z}$. The LP conservatively applies this recursion over these sets to construct an outer approximation of the reachable beliefs.}

\subsection{Exact Admissibility Objective}
\rev{Using the belief-level safety score $\phi_a$ from Section~\ref{sec:prelims},} given history $h_t$, exact admissibility at threshold $\beta$ is
\[
a\in\Xi^\star(h_t)\iff\inf_{b\in\mathcal{B}_t(b_0,h_t)}\phi_a(b)\ge\beta.
\]
This is exactly the shield defined in Equation \eqref{eq:runtime-shield}, written using $\phi_a$ for compactness. Since $\mathcal{B}_t$ is generally intractable to represent exactly, we replace it with the conservative envelope $\widehat{\mathcal{B}}_t$ in the deployed runtime shield below.

\subsection{Why Exact Convex-Hull Propagation is Intractable}
The exact lifted shield depends on the reachable-belief set $\mathcal{B}_t$, so the main algorithmic question is how to represent and propagate that set online. A natural first idea is to propagate the exact reachable set itself, but that approach is intractable.

The 
dynamical state update
step 
of the IPOMDP that applies the transition kernel $T$
is affine in the belief, so it preserves convexity by itself. The difficulty appears in the observation update: the posterior is obtained by multiplying by emission probability intervals and then normalizing by the total evidence mass, which introduces bilinear and linear-fractional terms.
As a result, even when the current belief set is convex, the exact set of reachable posteriors can become nonconvex after one update. Intuitively, an extreme point of this reachable set corresponds to a worst-case combination of two ingredients: a corner of the current belief set and a particular choice of endpoint 
probabilities
within the observation intervals. After one step, each such corner case can branch into many new corner cases, because a different endpoint choice can be worst-case for each state and observation. Repeating this process over time causes the number of relevant extreme points to grow exponentially with the horizon. This vertex explosion makes exact convex-hull propagation scale very poorly in both memory and runtime, which is why it is not suitable for online shielding \cite{noack2008convexsets}.

\subsection{Outer Polytope Propagation}
\label{sec:main-lp-formulation}
We describe how to propagate the belief envelope through the IPOMDP for a single step, i.e., a single application of the dynamics and observation update under a fixed action $a$.
We represent each belief envelope by a \emph{template polytope}: a polytope described by a fixed set of linear directions chosen in advance, together with the worst-case bound attained in each direction. Equivalently, the template matrix specifies which halfspaces we track, and the vector of offsets records how far the reachable set extends along those directions. At time $t$, maintain
\[
\widehat{\mathcal{B}}_t=\{b\in\mathbb{R}^{|S|}:A_tb\le d_t,\;b\ge 0,\;\mathbf{1}^\top b=1\},\quad \mathcal{B}_t\subseteq\widehat{\mathcal{B}}_t.
\]
\rev{Let \(n=|S|\). If the current envelope has \(m_t\) template rows, then \(A_t\in\mathbb{R}^{m_t\times n}\) and \(d_t\in\mathbb{R}^{m_t}\).}

A key structural reason for using convex envelopes is that the 
dynamical state update step is affine in the current belief, so sets of reachable pre-normalization beliefs remain convex; the loss of convexity enters only 
\rev{after the observation update, where the uncertain products $y_s w_s$ are coupled with the Bayes normalization by the total evidence mass.}
Our goal is not to propagate one posterior belief, but an entire prior belief envelope through an action-observation update. The difficulty is that the exact update does not fit directly into LP: uncertain observation 
probabilities
create bilinear products, and Bayes normalization introduces division by the total evidence mass. The purpose of the next steps is therefore to rewrite the propagation problem into a conservative LP-solvable form.

\rev{Steps 1--3 describe the update for one candidate prior belief $b\in\widehat{\mathcal{B}}_t$ and give the corresponding LP construction immediately after the intuition. Step 4 then assembles the final belief envelope by optimizing template directions over all prior beliefs and all admissible observation-probability choices. We write each LP in the standard form
\[
\max_x\;\; r^\top x\quad\mathrm{s.t.}\quad Gx\le h,\;Ex=f,\;\underline{x}\le x\le\overline{x}.
\]
Here \(Gx\le h\) collects inequality constraints, \(Ex=f\) collects equality constraints, \(\underline{x}\le x\le\overline{x}\) gives simple bounds, and \(r\) is the objective vector; the complete LP is the tuple \((r,E,f,G,h,\underline{x},\overline{x})\).}

\textbf{Step 1 (Dynamical state update):} Assuming that the agent chooses action $a$, we can propagate a prior belief through the known dynamics as,
\[
y=T_a^\top b.
\]
Here $T_a$ is the transition matrix for action $a$. This step computes the next-state belief based on the known dynamics function. The vector $y$ is the 
belief after the action $a$ but before conditioning on the observation. This map is affine in $b$, so convexity is preserved and the update remains LP-compatible.
\rev{In the LP, this step contributes the prior-normalization and state-update constraints \(\mathbf{1}^\top b=1\) and \(y=T_a^\top b\). Because \(T_a\) is row-stochastic, meaning its entries are nonnegative and each row sums to one, these constraints preserve total mass: \(b\ge0\) and \(\mathbf{1}^\top b=1\) make the prior a distribution, and \(y\) is also a distribution.}

\textbf{Step 2 (Observation update without normalization):}
At this point the current observation $o$ arrives. Assuming that $w$ denotes the observation probabilities, the updated unnormalized belief is given by,
\[
Z^-(o\mid s)\le w_s\le Z^+(o\mid s),\qquad u_s = y_s w_s\quad \forall s.
\]
 For each state $s$, the scalar $w_s$ denotes the observation %
 probability
 $Z(o\mid s)$ chosen from the admissible interval for that state-observation pair. The quantity $u_s$ is the resulting unnormalized posterior mass contributed by state $s$: 
 post-dynamical update belief for state $s$,
  $y_s$, weighted by the probability that state $s$ produces observation $o$. In an ordinary POMDP 
 $w_s$ would be fixed; in an \IPOMDP\ $w_s$ may vary within the interval, so we must account for all admissible choices.
The scalar $\sum_j u_j$ is the total evidence mass used in the normalization step below. So $u_s$ is intentionally state-wise, while $\sum_j u_j$ is the quantity obtained after summing those state-wise contributions across all $s$.

The coupling term is bilinear, so we replace $u_s=y_sw_s$ by its McCormick envelope \cite{mccormick1976} over known bounds on $y_s$ and $w_s$. This gives a sound linear 
over-approximation
of feasible $(u,y,w)$ tuples. Intuitively, the McCormick envelope is the standard LP relaxation of a bounded product: it replaces the exact bilinear equality by linear inequalities that still contain every feasible product value. In this sense, the resulting optimization is an LP relaxation of the forward recursion over the whole belief envelope rather than a different update rule.
\rev{The construction introduces the observation-probability vector \(w\) and unnormalized mass vector \(u\), and collects the pre-normalization variables into the LP decision vector in a fixed \emph{block order}, meaning the variable groups are concatenated in the same order used by the LP constraint matrices:
\[
z=(b^\top,y^\top,w^\top,u^\top)^\top\in\mathbb{R}^{4n}.
\]
With this block order, the equality constraints from Step 1 are
\[
E_0=\begin{bmatrix}\mathbf{1}^\top&0&0&0\\ -T_a^\top&I&0&0\end{bmatrix},
\qquad
f_0=\begin{bmatrix}1\\\mathbf{0}\end{bmatrix},
\]
so \(E_0z=f_0\) enforces \(\mathbf{1}^\top b=1\) and \(y=T_a^\top b\). The prior envelope itself enters as the first block row of the inequality matrix below. The construction then restricts the interval observation row to the observed output \(o\) while respecting row-stochasticity:
\begin{align*}
w_s^-&=\max\;\{Z^-(o\mid s),1-\sum_{o'\ne o}Z^+(o'\mid s)\},\\
w_s^+&=\min\;\{Z^+(o\mid s),1-\sum_{o'\ne o}Z^-(o'\mid s)\}.
\end{align*}
For each state, compute state-update bounds
\[
L_s=\min_{b\in\widehat{\mathcal{B}}_t}\; e_s^\top T_a^\top b,\qquad
U_s=\max_{b\in\widehat{\mathcal{B}}_t}\; e_s^\top T_a^\top b,
\]
where \(e_s\) is the standard basis vector. These bounds define the McCormick relaxation \cite{mccormick1976} of \(u_s=y_sw_s\) over \(y_s\in[L_s,U_s]\) and \(w_s\in[w_s^-,w_s^+]\). With block columns ordered as \((b,y,w,u)\), writing \(D_v\) for a diagonal matrix and \(\odot\) for componentwise multiplication, the inequality data for these unnormalized constraints are
\[
G_0=\begin{bmatrix}
A_t&0&0&0\\
0&D_{w^-}&D_L&-I\\
0&D_{w^+}&D_U&-I\\
0&-D_{w^-}&-D_U&I\\
0&-D_{w^+}&-D_L&I
\end{bmatrix},\quad
h_0=\begin{bmatrix}
d_t\\ L\odot w^-\\ U\odot w^+\\ -U\odot w^-\\ -L\odot w^+
\end{bmatrix}.
\]
The first block row is the prior-envelope constraint \(A_tb\le d_t\); the last four block rows are the McCormick inequalities. Together with \(E_0z=f_0\), the simple bounds are
\[
\underline{z}=(\mathbf{0}^\top,L^\top,(w^-)^\top,\mathbf{0}^\top)^\top,\quad
\overline{z}=(\mathbf{1}^\top,U^\top,(w^+)^\top,\mathbf{1}^\top)^\top.
\]
Thus \(G_0z\le h_0\), \(E_0z=f_0\), and the simple bounds define the unnormalized feasible set.}

\textbf{Step 3 (Normalization):} Posterior belief is fractional,
\[
b'_s=\frac{u_s}{\sum_j u_j}.
\]
This is the usual Bayes normalization: divide each unnormalized posterior mass by the total evidence mass so the posterior sums to one. That division is exactly what breaks linearity. To recover LP solvability, we apply Charnes--Cooper \cite{charnes1962programming}: introduce a scaling variable for $\left(\sum_j u_j\right)^{-1}$ and scale decision variables so normalization constraints become linear.

A useful way to view this step is to work first with unnormalized posterior masses, where the update is affine, and then recover normalized beliefs through a linear-fractional normalization handled by Charnes--Cooper \cite{charnes1962programming}. Intuitively, this rewrites the posterior update so that division by the total evidence mass is absorbed into the new variables, allowing normalized posterior queries to be solved by LP without changing their optimal values.
\rev{For a template direction \(c\), the normalized objective before transformation is the linear-fractional program
\[
\max\;\frac{c^\top u}{\mathbf{1}^\top u}
\quad\mathrm{s.t.}\quad
G_0z\le h_0,\;E_0z=f_0,\;\underline{z}\le z\le\overline{z}.
\]
Choose an arbitrary scale constant \(\kappa>0\), let \(\eta=\mathbf{1}^\top u\), and define \(\tilde z=(\kappa/\eta)z\) and \(t=\kappa/\eta\). With \(x=(\tilde z^\top,t)^\top\), \(d_u\in\mathbb{R}^{4n}\) selecting the \(u\)-block, and block order \((\tilde b,\tilde y,\tilde w,\tilde u,t)\), the standard LP tuple is
\[
r=(\mathbf{0}^\top,\mathbf{0}^\top,\mathbf{0}^\top,c^\top,0)^\top,
\]
\[
G=\begin{bmatrix}G_0&-h_0\\ I&-\overline z\\ -I&\underline z\end{bmatrix},\quad
h=\mathbf{0},\quad
E=\begin{bmatrix}E_0&-f_0\\ d_u^\top&0\end{bmatrix},\quad
f=\begin{bmatrix}\mathbf{0}\\\kappa\end{bmatrix},
\]
with \(\underline{x}=(-\infty,\ldots,-\infty,0)^\top\) and \(\overline{x}=(\infty,\ldots,\infty,\infty)^\top\). The inequality matrix \(G\) is the scaled copy of \(G_0z\le h_0\) together with the scaled variable bounds \(\underline z\le z\le\overline z\). The equality \(d_u^\top\tilde z=\kappa\) fixes the normalization scale. The LP objective equals \(\kappa c^\top u/(\mathbf{1}^\top u)\), so dividing the optimum by \(\kappa\) gives the actual template offset.}

\textbf{Step 4 (Template projection):} \orangechange{Lift the single-belief update to the whole envelope. Fix the next template matrix \(A_{t+1}\in\mathbb{R}^{m_{t+1}\times n}\), with one row per next-envelope template direction, and define}
\[
\widehat{\mathcal{B}}_{t+1}=\{b'\in\mathbb{R}^{|S|}:A_{t+1}b'\le d_{t+1},\;b'\ge 0,\;\mathbf{1}^\top b'=1\},
\]
where \rev{each component of the offset vector $d_{t+1}$} is computed by one LP (with $c_i^\top$ the $i$th row of $A_{t+1}$):
\[
d_{t+1,i}=\sup c_i^\top b',
\]
\rev{In this optimization, $A_t,d_t,A_{t+1},a,o,Z^-,Z^+$ are fixed inputs. The LP variables are the prior belief $b$, predicted belief $y$, admissible observation-probability vector $w$, unnormalized mass $u$, and the auxiliary Charnes--Cooper variables that encode the normalized posterior. Lower template bounds are obtained by using direction \(-c_i\) and negating the resulting optimum.}
Step 4 is where the whole prior envelope is propagated: each LP ranges over every prior belief in $\widehat{\mathcal{B}}_t$ together with every admissible
choice for the observation probabilities from the intervals
and returns the worst-case extent of the reachable posterior set in one template direction.
\rev{After solving this LP for every row \(c_i^\top\) of \(A_{t+1}\), the offsets \(d_{t+1,i}\) assemble the next envelope.}

Facet-wise maximization gives the tightest bound along each chosen template direction while keeping runtime propagation tractable.
Geometrically, each row of the template matrix asks how far the reachable posterior set extends in one chosen direction in belief space; for example, with coordinate directions, the template polytope simply records upper and lower bounds on each state's belief mass.

\orangechange{Facet-wise maximization costs one LP per template direction. If $n=|S|$ and $m_t$ is the number of constraints in $A_t b\le d_t$, a typical lifted LP has $O(n)$ variables (roughly $4n$ to $5n$) and $O(m_t+n)$ constraints, plus four McCormick inequalities per state. With $k_{t+1}$ template rows, propagation solves $k_{t+1}$ LPs per action-observation update, so runtime scales mainly with template facets and action-observation branching; our runtime measurements confirm this empirically (see Appendix~\ref{sec:inference-timing}).}

The shape of $\widehat{\mathcal{B}}_{t+1}$ is a template polytope in $H$-representation: an intersection of halfspaces with the probability simplex. This choice makes worst-case linear safety queries and propagation share the same LP machinery. In this work we use hypercube templates, i.e., coordinate-wise bounds on belief mass. Other template families, such as zonotopes, ellipsoids, or vertex-based polytopes, could be used instead, with the usual tradeoff that richer templates can tighten the envelope at higher computational cost.

\subsection{Conservative Shield}
\noindent Define lower bound
\[
\underline{p}_t(a)=\inf_{b\in\widehat{\mathcal{B}}_t}\phi_a(b).
\]
\rev{Since \(\phi_a(b)=\sum_s b_s\chi_\Omega(s,a)\), this score is the ordinary LP
\[
\underline{p}_t(a)=
\min_b\;\sum_s b_s\chi_\Omega(s,a)
\quad\mathrm{s.t.}\quad
A_tb\le d_t,\;b\ge0,\;\mathbf{1}^\top b=1,
\]
with no Charnes--Cooper variables.}
The deployed shield is
\[
\widehat{\Xi}(h_t)=\{a\in A: \underline{p}_t(a)\ge\beta\}.
\]
The replacement of $\mathcal{B}_t$ with $\widehat{\mathcal{B}}_t$ in the infimum calculation is conservative because $\widehat{\mathcal{B}}_t$ contains every belief in $\mathcal{B}_t$, so minimizing over the larger set can only decrease the safety score and therefore can only remove actions, not admit unsound ones.

\subsection{\texorpdfstring{\rev{Two-State Worked Example}}{Two-State Worked Example}}
\rev{As a numerical check, take an IPOMDP with two states \(s=1,2\), identity dynamics, and a current belief envelope \(\widehat{\mathcal{B}}=\{b:0.4\le b_1\le0.6,\ b_2=1-b_1\}\), where \(b_i\) is the belief mass on state \(i\). Let the observation alphabet be \(\{o,\bar o\}\), and suppose the observed output at this step is \(o\). Suppose the raw interval constraints are
\[
\begin{array}{c|cc}
 & Z(o\mid s) & Z(\bar o\mid s)\\ \hline
s=1 & [0.65,0.95] & [0.10,0.30]\\
s=2 & [0.10,0.50] & [0.60,0.80]
\end{array}
\]
with row-stochastic observation rows. Projecting onto the received symbol gives \(w_1\in[0.7,0.9]\) and \(w_2\in[0.2,0.4]\). Then
\[
b'_1=\frac{b_1w_1}{b_1w_1+(1-b_1)w_2}.
\]
Using coordinate template directions \(c=e_1\) and \(c=-e_1\), this expression is increasing in \(b_1\) and \(w_1\), and decreasing in \(w_2\). Thus the lower endpoint, equivalently the solution of the LP in direction \(-e_1\), occurs at \(b_1=0.4,w_1=0.7,w_2=0.4\):
\[
(b'_1)^{\min}=\frac{0.4\cdot0.7}{0.4\cdot0.7+0.6\cdot0.4}=\frac{7}{13}\approx0.538.
\]
With \(\kappa=1\), this Charnes--Cooper solution has evidence \(\eta=0.52\), scale \(t=25/13\), and posterior \(b'=(7/13,6/13)\).
The upper endpoint occurs at \(b_1=0.6,w_1=0.9,w_2=0.2\), where the row-stochastic complementary probabilities are \(Z(\bar o\mid1)=0.10\) and \(Z(\bar o\mid2)=0.80\):
\[
(b'_1)^{\max}=\frac{0.6\cdot0.9}{0.6\cdot0.9+0.4\cdot0.2}=\frac{27}{31}\approx0.871.
\]
The corresponding Charnes--Cooper solution has evidence \(\eta=0.62\), scale \(t=50/31\), and posterior \(b'=(27/31,4/31)\). Thus the propagated coordinate envelope is \(0.538\le b'_1\le0.871\), \(b'_2=1-b'_1\). If action \(a\) is safe only in state 1, then \(\phi_a(b')=b'_1\), the conservative score is \(7/13\), and the lifted shield admits \(a\) exactly for thresholds \(\beta\le7/13\), blocking it for thresholds such as \(\beta=0.6\).}

\section{Interval Construction and PAC Correctness}
\label{sec:interval-construction-and-pac}
We now show how the interval bounds are constructed from finite labeled data and why the resulting admissible set of perception uncertainty models contains the true perception uncertainty model with high probability. This section supplies the statistical side of the method: the previous sections defined how shielding uses the admissible set of perception uncertainty models online, and here we explain how those interval bounds are learned offline with a finite-sample correctness guarantee.

\subsection{Clopper--Pearson Construction}
For each true state $s$ and observation $o$, let $n_s$ be the number of samples with true label $s$, and $k_{s,o}$ the count observed as $o$. We use these counts to estimate the emission probability $Z(o\mid s)$ and to construct a confidence interval for that probability. For confidence level $1-\alpha_{s,o}$, the corresponding Clopper--Pearson bounds are
\begin{align}
Z^-(o\mid s) &= \mathrm{BetaInv}\!\left(\frac{\alpha_{s,o}}{2}; k_{s,o}, n_s-k_{s,o}+1\right),\\
Z^+(o\mid s) &= \mathrm{BetaInv}\!\left(1-\frac{\alpha_{s,o}}{2}; k_{s,o}+1, n_s-k_{s,o}\right),
\end{align}
where $\mathrm{BetaInv}(q;a,b)$ denotes the $q$-quantile (inverse CDF) of a $\mathrm{Beta}(a,b)$ distribution. These are the exact equal-tail binomial confidence bounds for the entry $Z(o\mid s)$. For the edge cases, we use the standard Clopper--Pearson conventions: if $k_{s,o}=0$, then $Z^-(o\mid s)=0$ and $Z^+(o\mid s)=\mathrm{BetaInv}\!\left(1-\frac{\alpha_{s,o}}{2};1,n_s\right)$; if $k_{s,o}=n_s$, then $Z^-(o\mid s)=\mathrm{BetaInv}\!\left(\frac{\alpha_{s,o}}{2};n_s,1\right)$ and $Z^+(o\mid s)=1$.
These interval endpoints define the admissible set of perception uncertainty models $\mathcal{Z}$ together with the simplex constraints $\sum_o Z(o\mid s)=1$ for each state $s$. In the sequel we work directly with the interval bounds $Z^-(o\mid s)$ and $Z^+(o\mid s)$ rather than constructing $\mathcal{Z}$ explicitly.

We use Clopper--Pearson rather than Wilson intervals because Theorem~\ref{thm:outer-pcis-safety} later needs a finite-sample guarantee that, with probability at least $\lambda$ over the random training dataset, the admissible set of perception uncertainty models contains the true perception uncertainty model. To obtain that dataset-level containment statement, we first need valid entry-wise coverage for each $(s,o)$ before applying a union bound. Clopper--Pearson provides exact binomial coverage for every sample size and parameter value, whereas Wilson intervals can undercover in finite samples, especially near the boundary \cite{clopper1934use,wilson1927,brown2001interval}. We therefore prefer Clopper--Pearson because guaranteed coverage is needed for the later PAC-style containment result, even at the cost of wider intervals. The practical consequence is that the learned interval bounds, and hence the deployed shield, become more conservative: we trade some permissiveness for a stronger finite-sample correctness guarantee.

\subsection{From Intervals to Outer Probability $\lambda$}
This subsection converts the entry-wise confidence intervals above into the dataset-level correctness probability used throughout the paper. The relevant quantity is the outer probability from Section~\ref{sec:prelims}: probability over the random labeled training dataset used to construct the interval bounds. Our goal is to show that, with probability at least $\lambda$, the admissible set of perception uncertainty models contains the true perception uncertainty model $Z^\ast$ entry-wise.
Let
\[
\mathcal{E}_{\mathrm{corr}}
=
\bigcap_{s\in S,\;o\in O}
 Z^\ast(o\mid s)\in [Z^-(o\mid s),Z^+(o\mid s)] 
\]
denote the event that every entry of the true perception uncertainty model lies inside its corresponding learned confidence interval.
By union bound, if the entry-wise failure probabilities satisfy
\[
\sum_{s\in S}\sum_{o\in O}\alpha_{s,o}\le\alpha,
\]
where $\alpha_{s,o}$ is the probability that the confidence interval for entry $(s,o)$ fails to contain the true value $Z^\ast(o\mid s)$,
then
		\[
		\Pr(\mathcal{E}_{\mathrm{corr}})\ge 1-\alpha \triangleq \lambda.
		\]
		In words, $\lambda$ is the probability, over the random training dataset used to construct the intervals, that the admissible set of perception uncertainty models $\mathcal{Z}$ contains the true perception uncertainty model $Z^\ast$. This is the dataset-level correctness probability that appears in Theorem~\ref{thm:outer-pcis-safety}.

	\paragraph*{Why the union bound is conservative}
			The union bound makes no structural assumptions about the observation process: it holds even when the entry-wise correctness events $\{Z^\ast(o\mid s)\in[Z^-(o\mid s),Z^+(o\mid s)]\}$ are arbitrarily dependent. In practice, these events are typically correlated. For a fixed true state $s$, all counts $\{k_{s,o}\}_{o\in O}$ are computed from the same pool of $n_s$ samples, so increasing the empirical mass assigned to one observation necessarily reduces the mass available to others. Because of this coupling across observations within the same state, summing the per-entry failure probabilities can substantially under-estimate $\Pr(\mathcal{E}_{\mathrm{corr}})$.

If additional assumptions are defensible, this outer bound can be tightened. For example, under independence of the per-entry failure events one has
	\[
	\Pr(\mathcal{E}_{\mathrm{corr}})\ge \prod_{s\in S}\prod_{o\in O}(1-\alpha_{s,o}),
	\]
		which allows selecting larger per-entry $\alpha_{s,o}$ for the same target $\lambda$ than the simple constraint $\sum_{s,o}\alpha_{s,o}\le\alpha$. More generally, any joint (simultaneous) confidence region construction that exploits structure---such as row-wise multinomial constraints, state-wise independence, or parametric assumptions on the perception uncertainty model---can replace the Bonferroni/union-bound.

\section{Guarantees}
\label{sec:guarantees}
This section gives two guarantees. The first is a conditional abstraction-soundness statement: if the true perception uncertainty model lies in the admissible set of perception uncertainty models, then any action admitted by the deployed shield is guaranteed to be $\Omega$-admissible with probability at least $\beta$. This follows from the construction of the belief envelope, which maintains $\mathcal{B}_t\subseteq\widehat{\mathcal{B}}_t$. The second adds the dataset-level correctness probability from Section~\ref{sec:interval-construction-and-pac}: when $\Omega$ is instantiated from a \PCIS-style safe subset $C\subseteq S$, we combine abstraction soundness with the probability $\lambda$ that the admissible set of perception uncertainty models contains the true perception uncertainty model to obtain a finite-horizon safety guarantee.

\begin{theorem}[Abstraction Soundness]
\label{thm:soundness}
Assume $Z^\ast\in\mathcal{Z}$. Then for any $a\in\widehat{\Xi}(h_t)$ and the true state ($s_t$) at time $t$,
\[
\inf_{Z\in\mathcal{Z}}\Pr_Z[a\in\Omega(s_t)\mid h_t]\ge\beta.
\]
\end{theorem}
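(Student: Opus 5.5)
The plan is to reduce the probabilistic statement to a linear functional of the posterior belief, and then use the envelope containment $\mathcal{B}_t\subseteq\widehat{\mathcal{B}}_t$.

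\textbf{Step 1 (rewrite the probability as $\phi_a$).} Fix any $Z\in\mathcal{Z}$. Under $Z$ the posterior over the true state given $h_t$ is $b_t^Z(s)=\Pr_Z[s_t=s\mid h_t]$. By the law of total probability,
\[
\Pr_Z[a\in\Omega(s_t)\mid h_t]=\sum_{s\in S}b_t^Z(s)\,\chi_\Omega(s,a)=\phi_a(b_t^Z).
\]
Here $\chi_\Omega(s,a)$ is deterministic given $s$. The belief $b_t^Z$ lies in $\mathcal{B}_t(b_0,h_t)$ by definition, where the infimum over $\mathcal{Z}$ ranges over models fixed along the history. If the per-step-varying semantics is used instead, each step's kernel is still drawn from $\mathcal{Z}$. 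The resulting posterior is then still reached by the recursion the envelope over-approximates, so I would treat that case as well.

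\textbf{Step 2 (envelope containment by induction on $t$).} I would show $\mathcal{B}_t\subseteq\widehat{\mathcal{B}}_t$ for every $t$, including per-step-varying kernels.

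The base case is $b_0\in\widehat{\mathcal{B}}_0$, which holds by initialization.

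For the inductive step, take any reachable $b'$. It arises from some $b\in\mathcal{B}_t\subseteq\widehat{\mathcal{B}}_t$ and some $Z\in\mathcal{Z}$, through the maps $y=T_a^\top b$, $w_s=Z(o\mid s)$, and $u_s=y_sw_s$, followed by normalization. I would check four facts in turn:
\begin{itemize}
\item $b$ satisfies $A_tb\le d_t$, the simplex constraints, and $E_0z=f_0$.
\item $w_s\in[w_s^-,w_s^+]$, because $Z$ is row-stochastic with entries in the intervals. In particular $Z(o\mid s)=1-\sum_{o'\ne o}Z(o'\mid s)$, which gives the projected bounds.
\item $y_s\in[L_s,U_s]$, because $b\in\widehat{\mathcal{B}}_t$.
\item The exact product $u_s=y_sw_s$ satisfies the four McCormick inequalities on that box, which is their standard validity.
\end{itemize}
Hence $z=(b,y,w,u)$ is feasible for the unnormalized system.

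Since $b'$ is defined only when the observation has positive probability, we have $\eta=\mathbf{1}^\top u>0$. The Charnes--Cooper point $(\kappa z/\eta,\kappa/\eta)$ is then LP-feasible with objective $\kappa c_i^\top b'$. Therefore $c_i^\top b'\le d_{t+1,i}$ for every template row, and $b'\in\widehat{\mathcal{B}}_{t+1}$. The simplex constraints hold automatically.

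\textbf{Step 3 (conclude).} For $a\in\widehat{\Xi}(h_t)$ we have $\underline{p}_t(a)=\inf_{b\in\widehat{\mathcal{B}}_t}\phi_a(b)\ge\beta$. Combining this with Steps 1 and 2 gives, for every $Z\in\mathcal{Z}$,
\[
\Pr_Z[a\in\Omega(s_t)\mid h_t]=\phi_a(b_t^Z)\ge\inf_{b\in\mathcal{B}_t}\phi_a\ge\inf_{b\in\widehat{\mathcal{B}}_t}\phi_a\ge\beta.
\]
Taking the infimum over $Z$ gives the claim. The assumption $Z^\ast\in\mathcal{Z}$ then ensures that the true conditional probability is one of those bounded.

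\textbf{Main obstacle.} The real content is the inductive step of Step 2. Two points there need care:
\begin{itemize}
\item The McCormick box bounds must be valid for every reachable point. In particular, $L_s,U_s$ are computed over $\widehat{\mathcal{B}}_t$, not $\mathcal{B}_t$, so they are valid by the inductive hypothesis.
\item The Charnes--Cooper rescaling must be exact on the feasible set, which requires $\eta>0$. I would handle zero-evidence histories by noting that the conditional probability is then undefined, so those histories are excluded.
\end{itemize}
I would also note that LP solutions are taken as exact suprema. Numerical tolerance is outside the theorem.
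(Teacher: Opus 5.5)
Your proposal is correct and follows the paper's proof. Both use the same three ingredients: induction establishing $\mathcal{B}_t\subseteq\widehat{\mathcal{B}}_t$, the identity $\Pr_Z[a\in\Omega(s_t)\mid h_t]=\phi_a(b_t^Z)$, and monotonicity of the infimum when the set is enlarged. The one difference is that your explicit check of the inductive step fills in what the paper only asserts ``by construction'': the projected observation bounds, McCormick validity on $[L_s,U_s]\times[w_s^-,w_s^+]$, and Charnes--Cooper feasibility of $(\kappa z/\eta,\kappa/\eta)$ when $\eta>0$.
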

\paragraph*{Proof sketch.}
By construction of the propagated envelope, $\mathcal{B}_t\subseteq\widehat{\mathcal{B}}_t$ for all $t$. Therefore, for each perception uncertainty model $Z\in\mathcal{Z}$, the history $h_t$ induces a posterior belief $b_t^Z\in\mathcal{B}_t\subseteq\widehat{\mathcal{B}}_t$, and the probability that $a$ is $\Omega$-admissible under that posterior is exactly the linear score $\phi_a(b_t^Z)$. Hence the worst-case conditional admissibility probability over $Z\in\mathcal{Z}$ is lower bounded by the minimum of $\phi_a$ over the envelope, namely $\underline{p}_t(a)$, which is at least $\beta$ whenever $a\in\widehat{\Xi}(h_t)$. The full proof appears in the \supplementname.
In words, any action admitted by the deployed shield is guaranteed to be $\Omega$-admissible with probability at least $\beta$ for every perception uncertainty model in the admissible set.

\paragraph*{Conservativeness.}
If $a\notin\widehat{\Xi}(h_t)$, then either $a\notin\Xi^\star(h_t)$, or $a$ is excluded due to envelope/relaxation over-approximation.

Thm \ref{thm:soundness} and \textit{Conservativeness} hold for any perfect-perception shield $\Omega$. To obtain an explicit finite-horizon safety lower bound, we instantiate $\Omega$ with a \PCIS-style invariance condition.

\begin{assumption}[PCIS Instantiation: One-Step Bound]
\label{assump:pcis-one-step}
We instantiate the perfect-perception shield $\Omega$ using the one-step \PCIS-style admissibility constraint in Equation~\eqref{eq:omega-pcis}. Assume that there exists a designated safe subset $C\subseteq S$ and a constant $\gamma\in(0,1]$ such that for all $s\in C$ and $a\in\Omega(s)$,
\[
\sum_{s'\in C}T(s,a,s')\ge\gamma.
\]
Thus, whenever the current state lies in $C$ and an action admitted by $\Omega$ is taken, the next state remains in $C$ with probability at least $\gamma$.
\end{assumption}

\begin{theorem}[Finite-Horizon Double Probability]
\label{thm:outer-pcis-safety}
Let $D$ denote the random training dataset, and let $\widehat{\Xi}^D$ and $\mathcal{Z}^D$ be the resulting deployed shield and admissible set of perception uncertainty models. Assume that the support of the initial belief $b_0$ is contained in $C$, actions are chosen from $\widehat{\Xi}^D$, and Assumption~\ref{assump:pcis-one-step} holds. Then for horizon $H$,
\[
\Pr_D\!\left[
\Pr\!\left[\mathrm{Safe}_{0:H}\right]\ge (\beta\gamma)^H
\right]\ge \lambda.
\]
\end{theorem}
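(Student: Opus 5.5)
The argument splits into an outer statement about the random dataset and an inner statement about a single trajectory. For the outer part, Section~\ref{sec:interval-construction-and-pac} shows that the correctness event $\mathcal{E}_{\mathrm{corr}}$, i.e.\ $Z^\ast\in\mathcal{Z}^D$, has $\Pr_D(\mathcal{E}_{\mathrm{corr}})\ge\lambda$. It therefore suffices to show that on every dataset $D$ with $Z^\ast\in\mathcal{Z}^D$ we have $\Pr[\mathrm{Safe}_{0:H}]\ge(\beta\gamma)^H$. The event $\{\Pr[\mathrm{Safe}_{0:H}]\ge(\beta\gamma)^H\}$ then contains $\mathcal{E}_{\mathrm{corr}}$, and monotonicity of $\Pr_D$ gives the claim. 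Throughout the inner argument we fix such a $D$; the trajectory probability is taken under the true model $Z^\ast$ and any (possibly randomized, history-dependent) choice of actions from $\widehat{\Xi}^D$. We read $\mathrm{Safe}_{0:H}$ as the event $s_0,\dots,s_H\in C$. We also need the shield to be nonempty along the trajectory, or the convention that an empty shield halts the run; I would state this as an implicit standing assumption.

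For the inner part, I would prove by induction on $k$ that $\Pr[s_0,\dots,s_k\in C]\ge(\beta\gamma)^k$. The base case $k=0$ holds because $\mathrm{supp}(b_0)\subseteq C$. For the inductive step, condition on a history $h_k$ reached with $s_0,\dots,s_k\in C$, and let $a_k\in\widehat{\Xi}^D(h_k)$. We decompose
\[
\Pr[s_{k+1}\in C\mid h_k,a_k,\ s_{0:k}\in C]\ \ge\ \Pr[a_k\in\Omega(s_k),\, s_k\in C,\, s_{k+1}\in C\mid h_k,\ldots].
\]
Two facts bound the right-hand side. First, Theorem~\ref{thm:soundness}, applied with $Z=Z^\ast\in\mathcal{Z}^D$, gives $\Pr_{Z^\ast}[a_k\in\Omega(s_k)\mid h_k]\ge\beta$. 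Second, Assumption~\ref{assump:pcis-one-step} gives $\Pr[s_{k+1}\in C\mid s_k,a_k]\ge\gamma$ whenever $s_k\in C$ and $a_k\in\Omega(s_k)$. Both hold because the Markov property makes $s_{k+1}$ depend only on $(s_k,a_k)$, and $a_k$ is a function of $h_k$ (plus independent randomness) and so is conditionally independent of $s_k$ given $h_k$. Multiplying the one-step factors $\beta\gamma$ across the tower of conditional expectations and taking the product over $k=0,\dots,H-1$ yields $(\beta\gamma)^H$.

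The main obstacle is the conditioning mismatch. Theorem~\ref{thm:soundness} bounds $\Pr[a\in\Omega(s_t)\mid h_t]$, conditioning only on the observable history, whereas the induction needs the bound conditional on the additional event $s_{0:k}\in C$. Conditioning on survival can shift the posterior toward states where $a_k$ is not in $\Omega$. I would handle this by avoiding conditioning on survival altogether. Define $G_k$ as the event that for all $j<k$ we have $a_j\in\Omega(s_j)$ and $s_{j+1}\in C$, so that $G_k\subseteq\mathrm{Safe}_{0:k}$. Then
\[
\Pr[G_{k+1}]=\mathbb{E}\bigl[\mathbf{1}_{G_k}\,\Pr[a_k\in\Omega(s_k)\mid h_k,\ldots]\,\gamma\bigr],
\]
and I need a $\beta$ lower bound for the middle factor on $G_k$. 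If this cannot be obtained directly, the fallback is to interpret the $\beta$-guarantee as holding for the belief conditioned on the event that the shield assumes: the envelope $\widehat{\mathcal{B}}_t$ is propagated from $b_0$ supported on $C$ and, under the McCormick/Charnes--Cooper over-approximation, contains the survival-conditioned posterior. A simpler alternative is the union-type accounting $\Pr[\neg\mathrm{Safe}]\le\sum_k(\ldots)$, but that gives a weaker additive bound instead of $(\beta\gamma)^H$. So the multiplicative chaining via $G_k$ is where I expect the real care to be needed. The remaining steps are routine.
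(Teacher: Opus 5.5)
Your outline is the paper's proof. You condition on $\mathcal{E}_{\mathrm{corr}}$, chain over $E_t=\{s_0,\dots,s_t\in C\}$, and take the factor $\beta$ from Theorem~\ref{thm:soundness} and the factor $\gamma$ from Assumption~\ref{assump:pcis-one-step}. The conditioning mismatch you isolate is exactly what the inductive step needs, and your proposal leaves it open, so the induction is not established. Neither of your repairs works. With $G_k$, the indicator $\mathbf{1}_{G_k}$ depends on $s_{0:k}$ and is correlated with $\mathbf{1}[a_k\in\Omega(s_k)]$ given $h_k$. So $\Pr[a_k\in\Omega(s_k)\mid h_k]\ge\beta$ only yields $\Pr[G_k\cap\{a_k\in\Omega(s_k)\}\mid h_k]\ge\Pr[G_k\mid h_k]-(1-\beta)$, not $\beta\Pr[G_k\mid h_k]$. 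The fallback is false: $\widehat{\mathcal{B}}_t$ is built to contain the posteriors $b_t^Z$ given $h_t$ alone. The survival-conditioned posterior (truncate to $C$ and renormalize after every step) is a different belief and in general lies outside the envelope. The paper's proof does not resolve this either. Its bound $\Pr[s_{t+1}\in C\mid h_t,E_t]\ge\gamma\Pr[a_t\in\Omega(s_t)\mid h_t,E_t]$ is valid, but it then replaces the right-hand factor by $\Pr[a_t\in\Omega(s_t)\mid h_t]$, which is exactly the move you flagged.

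The gap cannot be closed, because the multiplicative bound is false under the stated hypotheses. The score $\phi_a$ also credits states outside $C$ from which $a$ re-enters $C$: Eq.~\eqref{eq:omega-pcis} puts such $a$ in $\Omega(s)$ for $s\notin C$, and that credit is worthless for $\mathrm{Safe}_{0:H}$. Here is a counterexample.

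Take $\gamma=1$, $\beta\in(\tfrac12,1)$, $C=\{c,c',g,d\}$, a non-absorbing state $x\notin C$, and an absorbing state $f\notin C$. Use point intervals $Z^-=Z^+=Z^\ast$, so $\widehat{\Xi}$ coincides with $\Xi^\star$. (Adding small margins above $\beta$ makes the same example work for all sufficiently narrow Clopper--Pearson intervals.)

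Let $b_0$ put mass $\beta$ on $c$ and $1-\beta$ on $c'$. Action $a$ sends $c$ to $g$ with probability $(2\beta-1)/\beta$ and to $d$ otherwise, and sends $c'$ to $x$, so $\phi_a(b_0)=\beta$. At time $1$, $x$ emits $o_A$, $g$ emits $o_B$, and $d$ emits $o_A$ with probability $(1-\beta)/\beta$ and $o_B$ otherwise. The posterior is then $(\beta,1-\beta)$ on $(x,d)$ after $o_A$, and $(\beta,1-\beta)$ on $(g,d)$ after $o_B$.

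Let $a'$ send $x$ to $g$ and $d$ to $f$, and let $a''$ send $g$ to $g$ and $d$ to $f$. Then $a'$ is admitted after $o_A$ and $a''$ after $o_B$, each with score exactly $\beta$. Playing $a$, then $a'$ after $o_A$ and $a''$ after $o_B$, gives $\Pr[\mathrm{Safe}_{0:2}]=\Pr[s_1=g]=2\beta-1<\beta^2$.

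What the hypotheses do give is your additive accounting, $\Pr[E_{t+1}]\ge\gamma\bigl(\Pr[E_t]-(1-\beta)\bigr)$, and the example shows this is tight. A multiplicative bound needs an extra hypothesis. For example, if $S\setminus C$ is absorbing, Eq.~\eqref{eq:omega-pcis} forces $\Omega(s)=\emptyset$ for $s\notin C$. Then $\{a_t\in\Omega(s_t)\}\subseteq E_t$, and $\Pr[E_{t+1}]\ge\gamma\Pr[a_t\in\Omega(s_t)]\ge\beta\gamma\ge(\beta\gamma)^{t+1}$, provided the shield never becomes empty.
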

In words, the outer probability is over the randomness in the training dataset used to construct the interval bounds. With probability at least $\lambda$ over that dataset randomness, the resulting deployed shield guarantees closed-loop safety over horizon $H$ with probability at least $(\beta\gamma)^H$. Here $\mathrm{Safe}_{0:H}$ denotes the event that the trajectory starts in the safe subset $C$ and remains in $C$ through time $H$. (This is a finite-horizon guarantee, and the lower bound can decay quickly as $H$ grows.)

\paragraph*{Proof sketch.}
Condition on the model-correctness event that the true perception uncertainty model lies in the admissible set of perception uncertainty models. Along any safe prefix, Abstraction Soundness guarantees that the deployed shield selects an $\Omega$-admissible action with probability at least $\beta$, while Assumption~\ref{assump:pcis-one-step} gives probability at least $\gamma$ of remaining in the safe subset $C$ once such an action is taken. Therefore each step preserves safety with probability at least $\beta\gamma$, and a chain-rule argument over $H$ steps yields the conditional lower bound $(\beta\gamma)^H$ on $\Pr[\mathrm{Safe}_{0:H}]$. Finally, Section~\ref{sec:interval-construction-and-pac} gives the outer probability bound $\lambda$ for the model-correctness event, producing the stated double-probability guarantee. The full proof appears in the \supplementname.

\rev{Thus the theorem should be read as a finite-horizon design certificate whose usefulness depends on the design point. Since the lower bound scales as $(\beta\gamma)^H$, the threshold $\beta$, the one-step invariance level $\gamma$, and the horizon $H$ must be chosen carefully for the guarantee to remain meaningful; for lower per-step values or longer horizons, the formal lower bound can become vacuous even when empirical safety remains high. Section~\ref{sec:evaluation} therefore compares the numerical certificate values with empirical rollout safety to show when the bound is informative.}

\paragraph*{\rev{State quantization}}
\rev{The theorems above are stated for a finite \IPOMDP. If the original system has continuous state or action variables, a separate abstraction step is needed before applying the shield. Existing stochastic-abstraction methods can provide probabilistic error bounds between a continuous stochastic system and a finite abstraction \cite{soudjani2013adaptive,lavaei2022automated}. Such bounds can in principle be composed with our interval-perception guarantee by treating the abstraction error as an additional dynamics-side error term in the construction of the safe set or in the one-step lower bound $\gamma$. Developing this composition for a particular continuous abstraction method is outside the scope of the present paper; here our guarantees apply to the finite models used by the shield.}

\section{Experimental Evaluation}
\label{sec:evaluation}
\subsection{Setup}
\label{sec:evaluation-setup}
We evaluate five shielding methods on four benchmark autonomous systems, usually modeled as \MDP or \POMDP, but here represented as \IPOMDPs. These benchmarks are chosen to vary the ratio of states ($S$) to observations ($O$) and the scale of the underlying decision problem: TaxiNet (16 states, 16 observations), Obstacle (50 states, 3 observations), CartPole (82 states, 82 observations), and Refuel (344 states, 29 observations). TaxiNet is a runway-alignment benchmark in which safety means staying within the cross-track and heading-error bounds. Obstacle is a grid-navigation benchmark in which safety means avoiding obstacle cells. CartPole is a balance-control benchmark in which safety means remaining within the standard CartPole position and pole-angle limits. Refuel is a gridworld with fuel constraints in which safety means avoiding both obstacle collision and fuel exhaustion away from a refuel station.

Obstacle and Refuel are adapted from prior support-shielding case studies \cite{carr2023pomdpshielding}, but their observation kernels are modified here to fit the interval-perception setting and allow us to study the effect of shielding. In particular, the original Refuel benchmark could be solved nearly perfectly by an RL-trained controller, leaving little room for shielding to improve safety. We therefore modify Refuel to remove direct observation of the crash predicate and of whether fuel is nonzero, so the safety-critical variables are latent and must be inferred from history rather than read directly from the current observation. TaxiNet uses observation uncertainty derived from real autonomous-taxiing perception-confusion data \cite{xtaxinet}\rev{\cite{fremont2020formal,kadron2022case}}. CartPole likewise uses a learned perception model, but in this case we train the perception model ourselves for the benchmark. Obstacle and Refuel instead use synthetic observation uncertainty, meaning that we begin from a hand-specified observation function and perturb it with an observation-noise budget to obtain interval-valued observation probabilities. These domains stress different partial-observability regimes, from near-bijective observation kernels (CartPole) to severe compression of many states into a few observations (Obstacle) and latent safety-critical variables that must be inferred from history (Refuel). In each case, the dynamics are fixed and the observation kernel uses confidence intervals, yielding the \IPOMDP used in evaluation\rev{; unless otherwise noted, these are Clopper--Pearson intervals at significance level $\alpha=0.05$}.

\rev{The continuous-state benchmarks are first converted to finite models before applying our shielding construction. TaxiNet uses the TaxiNet abstraction from prior work \cite{xtaxinet,fremont2020formal,kadron2022case,pasareanu2023closedloop}: cross-track error is discretized to \(-2,\ldots,2\), heading error to \(-1,0,1\), and an absorbing failure state gives 16 states. CartPole uses three uniform bins for each of position, velocity, pole angle, and angular velocity over the artifact ranges \([-0.318,0.397]\), \([-2.288,2.550]\), \([-0.258,0.266]\), and \([-3.094,3.290]\), giving \(3^4+1=82\) total states. Obstacle and Refuel are finite gridworlds: Obstacle uses a \(7\times 7\) grid plus failure, and Refuel uses a \(7\times 7\) grid with fuel levels \(0,\ldots,6\) plus failure. The induced \PCIS one-step bounds are \(\gamma=1\) for all four benchmarks: the PCIS shields admit only actions whose dynamics support remains safe in the finite models used for evaluation.}

\orangechange{We compare five shielding methods by measuring failure, stuck, and safe rates under the same RL-trained controller and perception regime; additional evaluation details appear in the \supplementname.}

\subsection{Baselines}
\label{sec:evaluation-baselines}
The comparison includes two prior reference points and three methods introduced in this paper. The prior baselines are the \emph{Observation} shield and the support-based shield of Carr \emph{et al.} \cite{carr2023pomdpshielding}. By contrast, \emph{Single-Belief}, \emph{Fwd-Sampling}, and \emph{Envelope} are all novel methods in this work: Single-Belief is our point-estimate history-based shield, Fwd-Sampling is our sampled under-approximation to reachable interval beliefs, and Envelope is our LP-based over-approximation shield.

\orangechange{The \emph{Observation} shield is memoryless: at each step it computes a posterior distribution over states using only the current observation, a uniform prior, and the point-estimate perception model, and it allows exactly those actions whose probability of being safe under that posterior exceeds the threshold. The point-estimate perception model is built from expected emission probabilities. The \emph{Single-Belief} shield maintains a standard Bayesian belief under point-estimate observations and applies the same threshold test to the current belief. The proposed \emph{Envelope} shield maintains a template polytope over reachable beliefs under interval uncertainty and admits an action only when its worst-case safety score exceeds the threshold. The \emph{Fwd-Sampling} shield instead tracks an under-approximation of the reachable belief set using forward-sampled concrete belief points; it maintains a budget of \(N=500\) belief points and uses \(K=100\) sampled observation-probability vectors per propagation step. Finally, we include the support-based shield of Carr \emph{et al.} \cite{carr2023pomdpshielding}, computed on the point-estimate POMDP whenever the support-MDP construction is feasible. That construction builds an MDP whose states are reachable belief supports and computes a winning region offline.}

Envelope propagation is practical for TaxiNet and Obstacle, but not for CartPole and Refuel. Fwd-Sampling is feasible on all four benchmarks. Carr is feasible for TaxiNet, Obstacle, and CartPole but infeasible on Refuel because the number of reachable supports in the support-MDP becomes too large during the offline breadth-first exploration, and on TaxiNet it degenerates because the point-estimate POMDP has no winning support.

\orangechange{This feasibility split shapes the interpretation. Envelope is the strictest and most expensive method because it certifies actions against an over-approximation of all reachable beliefs under interval uncertainty. We therefore use it as the strongest robust reference point when tractable, while Fwd-Sampling is the scalable approximation at much lower online cost. Evaluating both separates what stricter interval-belief reasoning buys from how closely cheaper sampling recovers it in practice.}

\subsection{Evaluation Protocol}
\label{sec:evaluation-protocol}
\orangechange{To evaluate each shield, we run Monte Carlo rollouts of the closed-loop system consisting of the \IPOMDP, an RL-trained controller, and the shield. In each rollout, the current history is passed to the controller, the shield filters the controller's proposed action, the next latent state is sampled from the dynamics, and the next observation is sampled from the perception mechanism. Because the observation probabilities are interval-valued, we consider two ways of instantiating perception during evaluation. In the \emph{uniform} regime, each rollout samples a valid observation kernel within the interval constraints and then samples observations from that kernel throughout the rollout. In the \emph{adversarial} regime, we first solve an offline optimization problem over fixed interval realizations and then hold the resulting realization fixed for the entire rollout. Concretely, we optimize over observation kernels within the interval bounds using a cross-entropy method to maximize the empirical failure rate against the deployed shield-compliant RL controller.} \rev{We do not use a time-varying adversary that chooses a new observation kernel at each step: such an adversary would match the worst-case variability already allowed by our interval-belief shields and would therefore make these interval-belief shields appear stronger relative to evaluations under a fixed-kernel perception model.}

\orangechange{All reported numbers use an RL-trained controller from our experimental implementation. The controller is a Deep Q-Network trained directly on the \IPOMDP dynamics with observation-action history as input, using a safety-oriented reward that penalizes reaching \texttt{FAIL}, rewards completing the finite horizon safely, and gives a small per-step reward for survival. The controller proposes an action and the shield either admits it or replaces it by a random admissible action; details appear in the \supplementname. We sweep $\beta\in\{0.50,0.60,0.65,0.70,0.75,0.80,0.85,0.90,0.95\}$ for all threshold-based shields, run 200 Monte Carlo rollouts per condition, and report three mutually exclusive outcomes: \emph{fail} (the trajectory reaches a \texttt{FAIL} state),
\emph{stuck} (the trajectory reaches a state where the shield blocks every action),
and \emph{safe} (neither event occurs within the benchmark horizon). The benchmark horizons are 20 steps for TaxiNet, 25 for Obstacle, 15 for CartPole, and 30 for Refuel.}

For threshold-based shields we report the operating point that minimizes fail rate, with stuck rate as a tiebreaker, and we use the per-threshold sweeps to discuss the corresponding stuck-avoidance tradeoffs.

\subsection{Cross-Case Picture}
\begin{figure*}[t]
\centering
\includegraphics[width=0.92\textwidth]{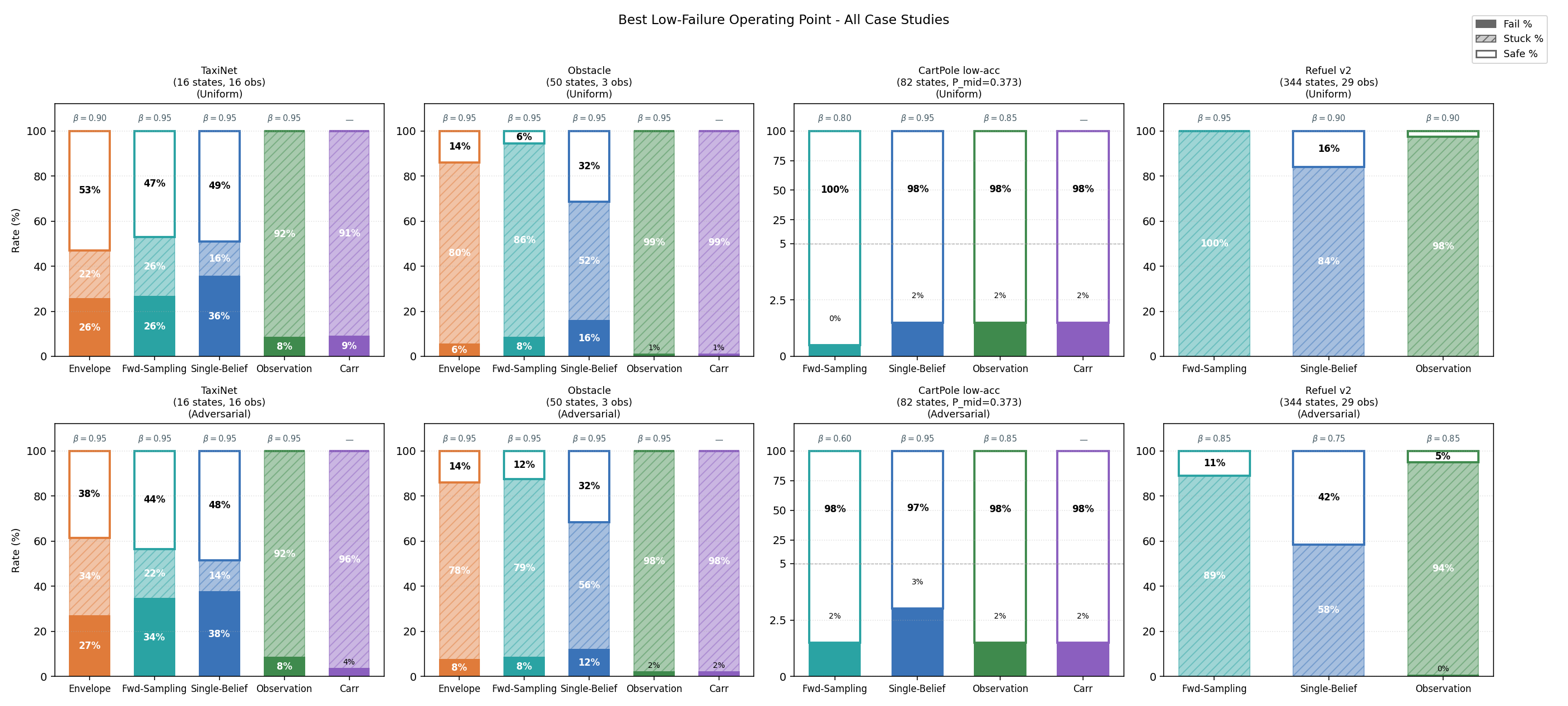}\caption{\orangechange{Best low-failure operating point of each shield under uniform and adversarial perception. The selected threshold minimizes fail rate and then stuck rate. Observation informativeness determines which shield family is useful, and adversarial harshness varies by case study. Only CartPole uses a disjoint piecewise vertical scale, expanding $0$--$5\%$ and compressing $5$--$100\%$; other case studies use the standard scale. The corresponding numerical values and uncertainty intervals appear in Table~S-I.}}
\label{fig:summary-bars}
\end{figure*}

\begin{figure*}[t]
\centering
\includegraphics[width=0.92\textwidth]{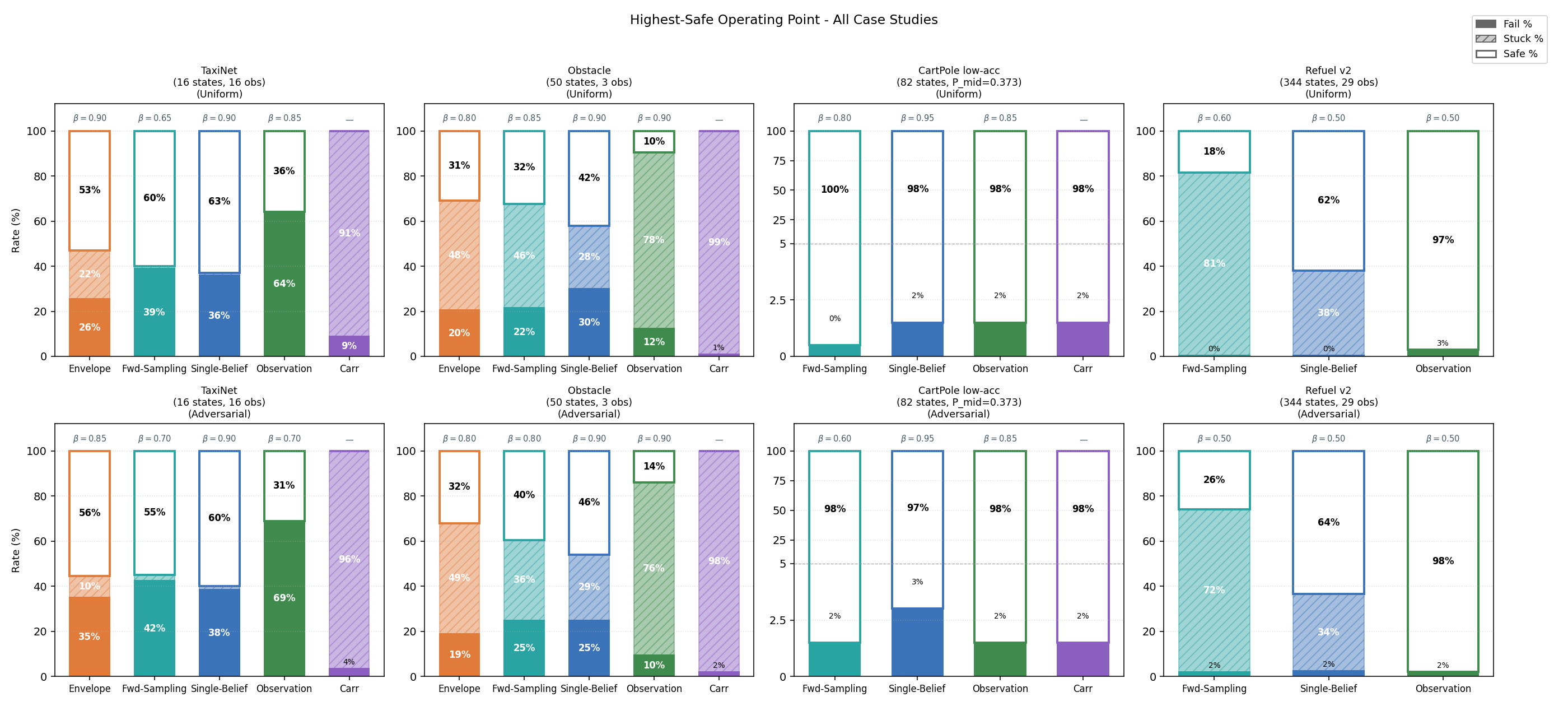}
\caption{\orangechange{Highest-safe operating point of each shield. The selected threshold maximizes safe completions, with lower fail and then lower stuck as tiebreakers. Solid color denotes fail, hatching denotes stuck, and the outlined top segment denotes safe. Threshold labels match Fig.~\ref{fig:summary-bars}. Only CartPole uses the disjoint piecewise vertical scale; other case studies use the standard scale. The corresponding numerical values and uncertainty intervals appear in Table~S-II.}}
\label{fig:summary-bars-safe}
\end{figure*}

\orangechange{Figure~\ref{fig:summary-bars} summarizes the lowest-failure setting across the four benchmarks. A key driver of the observed differences appears to be observation informativeness: how strongly the current observation narrows down the latent state. When observations almost identify the latent state, as in CartPole, all feasible shields are nearly equivalent and incur essentially no stuck-rate penalty. At the opposite extreme, Obstacle compresses 50 states into only 3 observations; memoryless methods then lose any useful middle ground between allowing risky actions and blocking almost everything, while belief history appears necessary. TaxiNet sits between these extremes: it has as many observation labels as states, but the learned perception model is noisy enough that one-step posteriors remain ambiguous. Refuel is different again: the observation kernel hides the variables that determine whether collision or fuel exhaustion is imminent, so history is useful but even accurate belief tracking can lead to paralysis.}

\orangechange{The proposed envelope construction improves the safety side of the tradeoff whenever it is feasible to run. Here each shield is evaluated at its own threshold selected by the low-failure rule of Fig.~\ref{fig:summary-bars}, so the comparison is between each method's preferred safety-first operating point rather than at a common shared threshold. On TaxiNet, Envelope remains the strongest robust option, reaching $25.5\%$ fail and $21.5\%$ stuck under uniform perception, compared with $35.5\%$ fail and $15.5\%$ stuck for Single-Belief and $26.5\%$ fail and $26.5\%$ stuck for Fwd-Sampling; under adversarial perception the same ordering remains, at $27.0\%/34.5\%$ for Envelope versus $37.5\%/14.0\%$ for Single-Belief and $34.5\%/22.0\%$ for Fwd-Sampling. On Obstacle, the gain is larger: Envelope reaches $6\%$ fail and $80\%$ stuck under uniform perception, while Single-Belief remains at $16\%$ fail and $52\%$ stuck and Fwd-Sampling improves to $8\%$ fail and $86\%$ stuck. Under adversarial perception, Envelope reaches $8\%$ fail and $78\%$ stuck, still clearly improving on Single-Belief's $12\%/56\%$. This extra stuck behavior is the empirical counterpart of the over-approximation soundness result: because the envelope certifies actions against all beliefs in an over-approximation of the reachable set, it can block actions that look safe under favorable point-estimate beliefs but are not safe in the worst case.}

\orangechange{The adversarial columns show that worst-case observation realizations affect benchmarks unevenly. They are harsher on history-based shields in TaxiNet and, less so, Obstacle, but remain closer to uniform results in CartPole and Refuel. This pattern is consistent with TaxiNet and Obstacle leaving more room for an optimized observation kernel to interfere with the closed-loop policy.}

\orangechange{Figure~\ref{fig:summary-bars-safe} provides the complementary stuck-avoidance view. Once the threshold is selected to maximize safe completions rather than minimize failures, the ranking changes in the direction suggested by the low-failure figure. On TaxiNet and Obstacle, Single-Belief or Fwd-Sampling usually become preferable to Envelope, suggesting that the extra conservatism of Envelope can convert many potentially safe runs into stuck runs. On Refuel, the strongest stuck-avoidance picture is dominated by Observation, which reaches about $97\%$ safe under uniform perception and $98\%$ under adversarial perception at low thresholds, whereas the zero-failure operating points of Single-Belief and Fwd-Sampling have much higher stuck rates. CartPole remains near-degenerate across all shields, with every method achieving roughly $97$--$99\%$ safe completions and no stuck episodes at its best threshold.}

\orangechange{The remaining baselines illustrate complementary failure modes. The Observation shield can be competitive when a single observation already localizes the state, but it degrades sharply under aliasing, meaning that multiple latent states produce the same or very similar observations. In TaxiNet, its low-failure point ($8.5\%$ fail under uniform perception) is only obtained at $91.5\%$ stuck, which is still a much less attractive tradeoff than what the envelope achieves by using history quantitatively. In Obstacle, Observation becomes almost indistinguishable from support-only shielding, both ending at approximately $1$--$2\%$ fail and $98$--$99\%$ stuck. In Refuel, however, Observation has a different role: at lower thresholds it is still the only shield that attains a $0\%$-stuck operating point, albeit at $3$--$4.5\%$ failure. Fwd-Sampling sits between Single-Belief and Envelope on TaxiNet and Obstacle, but on Refuel it behaves much more conservatively, with $0\%$ fail at the cost of $100\%$ stuck under uniform perception and about $89\%$ stuck under adversarial perception. Carr's shield is strongest when support information is already nearly enough to identify the state, but it can collapse entirely once every reachable support contains conflicting safety requirements. TaxiNet is the clearest example: under Carr's offline support-MDP analysis on the point-estimate POMDP, the winning region is empty, so Carr effectively blocks almost all runs from the initial step.}

\begin{figure}[t]
\centering
\includegraphics[width=0.98\columnwidth]{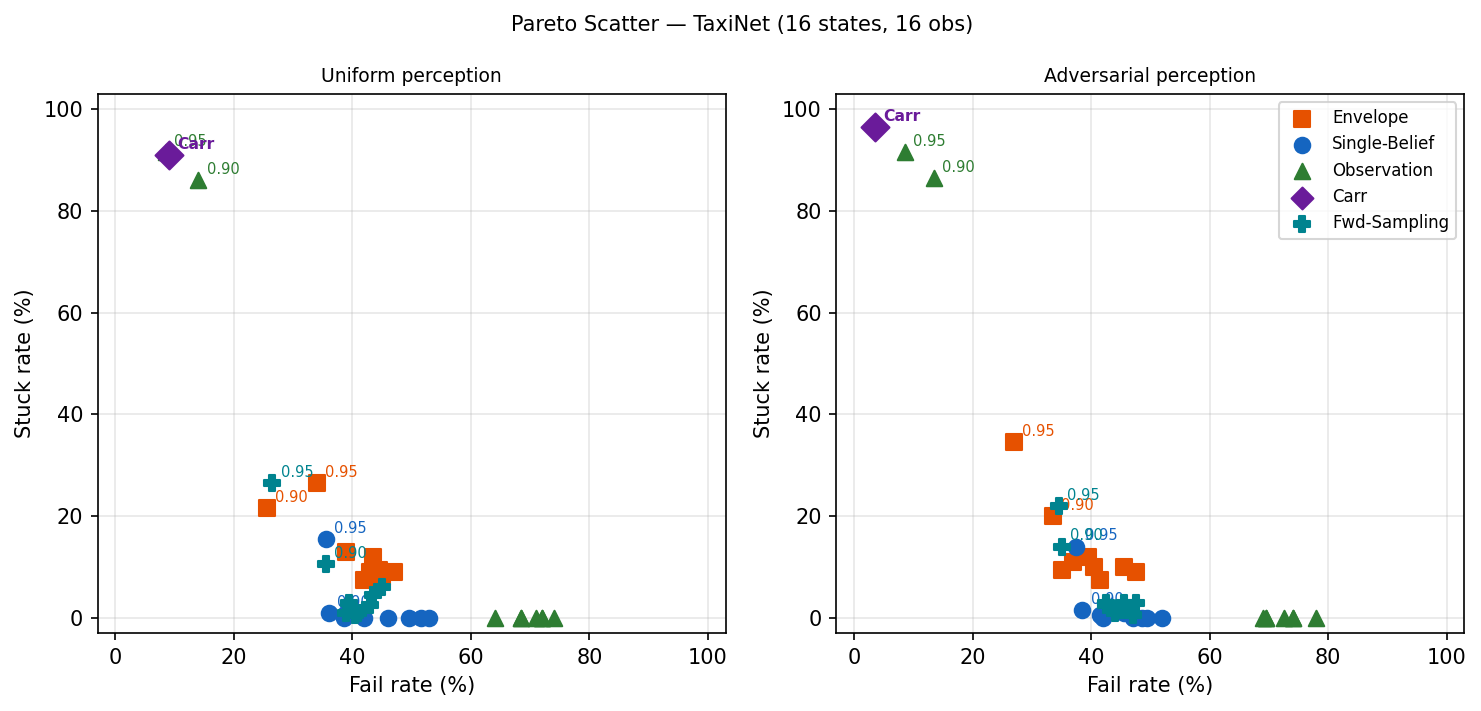}\caption{\orangechange{TaxiNet Pareto scatter over threshold $\beta$ under adversarial perception. History-based shields improve fail-versus-stuck tradeoffs over memoryless Observation, and Envelope improves on Single-Belief by removing actions safe only for the point-estimate model.}}
\label{fig:pareto-taxinet}
\end{figure}

\subsection{Comparison with Support-Based Shielding}
\label{sec:support-comparison}
The comparison with Carr-style support shielding clarifies when probability mass is important. Support reasoning can be highly effective when observations are informative enough that reachable supports stay small; the CartPole results are consistent with this, since Carr remains competitive there. But support-only reasoning discards the distinction between a state with posterior weight $0.49$ and one with posterior weight $10^{-4}$, and in aliased problems that distinction can determine whether a useful action should still be allowed. TaxiNet makes this failure mode stark: support shielding has no winning support at all, even though the envelope, Fwd-Sampling, and Single-Belief filters all retain nontrivial safe behavior. Obstacle shows the softer version of the same phenomenon, where support reasoning remains feasible but collapses to near-total stuck.

Taken together, these experiments support a narrow but important claim. Interval belief envelopes are not a universal replacement for simpler shields. Instead, they appear most useful in the regime where point-estimate belief filters are too optimistic and support abstractions are too coarse. CartPole suggests that they are unnecessary when observations already localize the state well. TaxiNet and Obstacle illustrate where they can be useful. Refuel shows the computational boundary where heavier robust shielding becomes impractical. Fwd-Sampling is a useful intermediate point in that design space: it is far cheaper than the LP envelope and available on all four benchmarks, but because it under-approximates the reachable belief set it generally trades extra blocked actions for only limited safety improvement. In the difficult aliased regimes studied here, the envelope provides the strongest safety performance among the feasible shields, and its extra conservatism can be quantified rather than treated as a black box.

\subsection{\texorpdfstring{\rev{Conformal Shielding Comparison}}{Conformal Shielding Comparison}}
\label{sec:conformal-comparison}
\rev{We also compare the interval-belief shields with the TaxiNet conformal shielding baseline of~\cite{ScarbroImrie25}. The comparison keeps the same finite TaxiNet \IPOMDP dynamics, action space, controller, and rollout protocol, but substitutes the simulated perception artifact from~\cite{ScarbroImrie25}. That artifact provides paired samples \((s,\hat{s},C_\kappa)\), where \(s\) is the true discretized state, \(\hat{s}\) is the point estimate consumed by our shields, and \(C_\kappa\) is the conformal prediction set consumed by the conformal shield. We model this as \(P(\hat{s},C_\kappa\mid s)=Z(\hat{s}\mid s)Q_\kappa(C_\kappa\mid s,\hat{s})\): the interval-belief shields use \(Z\), while the conformal shield samples its set conditionally on the realized \((s,\hat{s})\), including under adversarial point-estimate realizations. This preserves the empirical dependence between point estimates and conformal sets; the \supplementname\ gives the full construction.}

\begin{figure}[t]
\centering
\includegraphics[width=0.82\columnwidth]{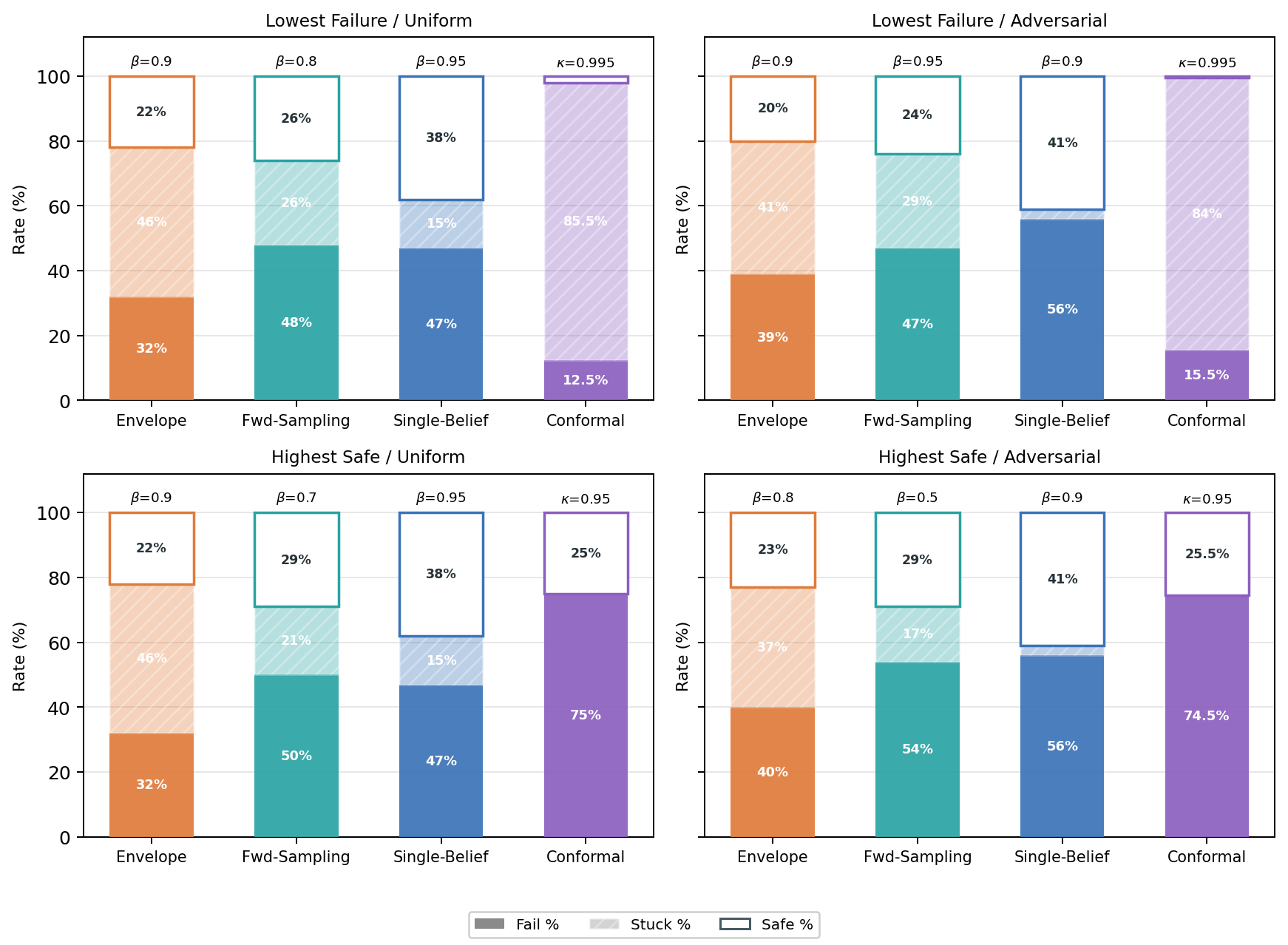}
	\caption{\rev{TaxiNet comparison under uniform and adversarial perception. The top row selects each method's lowest-failure operating point, while the bottom row selects each method's highest-safe operating point. The interval-belief shields (Single-Belief, Envelope, Fwd-Sampling) are swept over threshold \(\beta\); the Conformal baseline is swept over confidence level \(\kappa\). The setting that each method selects in each panel is annotated above its bar. The corresponding numerical values and uncertainty intervals appear in Table~S-III.}}
\label{fig:taxinet-v2-extremal-comparison}
\end{figure}

\rev{Figure~\ref{fig:taxinet-v2-extremal-comparison} suggests a distinct tradeoff for the conformal baseline. At the lowest-failure operating point, the high-confidence conformal shield has lower failure rates than the interval-belief shields, but most rollouts become stuck: \(12.5\%\) fail and \(85.5\%\) stuck under uniform perception, and \(15.5\%\) fail and \(84\%\) stuck under adversarial perception. The strongest interval-belief shield in this row is Envelope, with \(32\%/46\%\) fail/stuck under uniform perception and \(39\%/41\%\) under adversarial perception. At the highest-safe operating point, the conformal shield does outperform Envelope and Fwd-Sampling on safe completions, reaching \(25\%\) and \(25.5\%\) in the two regimes. However, this operating point still leaves about three quarters of rollouts failing, while Single-Belief reaches \(38\%\) and \(41\%\) safe completions. In this experiment, the conformal baseline appears to reduce failures mainly by blocking most rollouts at high confidence, while lower confidence improves liveness at the cost of a high failure rate. The interval-belief shields exhibit a different fail-versus-stuck tradeoff, consistent with their use of history and probability mass within the uncertainty model.}

\rev{This experiment is empirical and does not test the exchangeability assumption behind conformal coverage under closed-loop policy changes; it compares the two approaches assuming no distribution shift. The high stuck rates for conformal shielding at high confidence are consistent with a method that does not account for observation history or state probabilities. In other words, conformal shielding appears to be a coarser method that preserves less information about the state distribution and therefore makes more conservative shielding choices.}

\subsection{\texorpdfstring{\rev{Additional Analyses}}{Additional Analyses}}
\rev{Table~\ref{tab:bgamma-calibration} compares the finite-horizon certificate with empirical rollout outcomes for the tractable Envelope cases. The comparison should be read as a lower-bound sanity check, not as a prediction of rollout rates: the theorem certifies remaining in the safe state set, whereas the plotted safe-completion outcome excludes both failures and stuck rollouts. For that reason the table compares the certificate with empirical non-failure rather than with the safe-completion segment of the stacked bars. The certificate is well below the observed non-failure rates across the tested thresholds, as expected, because the proof compounds a worst-case one-step lower bound over the full horizon.}

\begin{table*}[t]
\centering
\rev{
		\caption{\rev{Finite-horizon certificate values compared with empirical safety for the Envelope shield at three runtime thresholds. Uncertainty is one binomial standard deviation over 200 rollouts. Empirical non-failure, \(1-\)fail, is the closest analogue of \(\mathrm{Safe}_{0:H}\), because stuck rollouts are liveness failures but need not leave the safe state set. Since the induced \PCIS one-step bound is \(\gamma=1\) for these benchmark models, the certificate column reports \(\beta^H\).}}
\label{tab:bgamma-calibration}
\scriptsize
\begin{tabular}{llccccccc}
\toprule
Case study & Regime & \(\beta\) & \(H\) & Fail & Stuck & Emp. non-failure & \(\beta^H\) & Gap\\
\midrule
TaxiNet & uniform & 0.85 & 20 & \(0.435\!\pm\!0.035\) & \(0.120\!\pm\!0.023\) & \(0.565\!\pm\!0.035\) & 0.039 & \(0.526\!\pm\!0.035\)\\
TaxiNet & uniform & 0.90 & 20 & \(0.255\!\pm\!0.031\) & \(0.215\!\pm\!0.029\) & \(0.745\!\pm\!0.031\) & 0.122 & \(0.623\!\pm\!0.031\)\\
TaxiNet & uniform & 0.95 & 20 & \(0.340\!\pm\!0.033\) & \(0.265\!\pm\!0.031\) & \(0.660\!\pm\!0.033\) & 0.358 & \(0.302\!\pm\!0.033\)\\
TaxiNet & adversarial & 0.85 & 20 & \(0.350\!\pm\!0.034\) & \(0.095\!\pm\!0.021\) & \(0.650\!\pm\!0.034\) & 0.039 & \(0.611\!\pm\!0.034\)\\
TaxiNet & adversarial & 0.90 & 20 & \(0.335\!\pm\!0.033\) & \(0.200\!\pm\!0.028\) & \(0.665\!\pm\!0.033\) & 0.122 & \(0.543\!\pm\!0.033\)\\
TaxiNet & adversarial & 0.95 & 20 & \(0.270\!\pm\!0.031\) & \(0.345\!\pm\!0.034\) & \(0.730\!\pm\!0.031\) & 0.358 & \(0.372\!\pm\!0.031\)\\
Obstacle & uniform & 0.85 & 25 & \(0.13\!\pm\!0.023\) & \(0.65\!\pm\!0.034\) & \(0.88\!\pm\!0.023\) & 0.017 & \(0.858\!\pm\!0.023\)\\
Obstacle & uniform & 0.90 & 25 & \(0.12\!\pm\!0.023\) & \(0.71\!\pm\!0.032\) & \(0.88\!\pm\!0.023\) & 0.072 & \(0.808\!\pm\!0.023\)\\
Obstacle & uniform & 0.95 & 25 & \(0.06\!\pm\!0.016\) & \(0.81\!\pm\!0.028\) & \(0.94\!\pm\!0.016\) & 0.277 & \(0.668\!\pm\!0.016\)\\
Obstacle & adversarial & 0.85 & 25 & \(0.19\!\pm\!0.027\) & \(0.60\!\pm\!0.035\) & \(0.82\!\pm\!0.027\) & 0.017 & \(0.798\!\pm\!0.027\)\\
Obstacle & adversarial & 0.90 & 25 & \(0.11\!\pm\!0.022\) & \(0.71\!\pm\!0.032\) & \(0.90\!\pm\!0.022\) & 0.072 & \(0.823\!\pm\!0.022\)\\
Obstacle & adversarial & 0.95 & 25 & \(0.08\!\pm\!0.019\) & \(0.79\!\pm\!0.029\) & \(0.92\!\pm\!0.019\) & 0.277 & \(0.648\!\pm\!0.019\)\\
\bottomrule
\end{tabular}}
\end{table*}

\rev{The \supplementname\ reports two further diagnostics. First, a TaxiNet sweep over the Clopper--Pearson perception-interval significance level \(\alpha\in\{0.01,0.025,0.05,0.075,0.10,0.15,0.20,0.30\}\) shows that \(\beta\) is the dominant tuning parameter and that \(\alpha\) mainly fine-tunes interval conservatism within the operating regime selected by \(\beta\). Second, an envelope-conservatism study compares the LFP envelope with forward-sampled under-approximations under fixed-kernel and time-varying perception semantics, separating the conservatism due to belief-set over-approximation from the additional pessimism of protecting against time-varying \(Z_t\).}

\subsection{Inference-Time Tradeoffs}
\rev{Mean shield inference time follows the expected hierarchy: Observation, Single-Belief, and Carr run in the microsecond regime, Fwd-Sampling in the millisecond-to-subsecond regime, and LP-based Envelope two to five orders of magnitude slower where feasible, with Refuel the scalability boundary. Single-Belief is thus the strongest lightweight default, Fwd-Sampling a scalable middle ground, and Envelope a low-frequency or smaller-model mechanism for hard aliased regimes. Per-benchmark timings and further analysis of these results are available in Appendix~\ref{sec:inference-timing}.}

\section{Related Work}
The closest related line of work is shielding under partial observability. Carr \emph{et al.} \cite{carr2023pomdpshielding} compute the support-based shield over belief supports, retaining only which latent states remain possible after an observation history and discarding emission probabilities. That support-based shield yields strong soundness guarantees for avoid and reach-avoid objectives and is an important baseline for our setting. Our contribution differs in the semantic object being propagated: instead of reachable supports, we propagate quantitative belief envelopes induced by interval-valued emission probabilities. This retains information that the support-based shield intentionally throws away, which is useful when support winning regions collapse or become too coarse to admit a usable shield.

  Related work also abstracts learned perception by a compact probabilistic interface between latent state     and downstream symbolic reasoning. P\u{a}s\u{a}reanu \emph{et al.} \cite{pasareanu2023closedloop} replace a  vision stack by a confusion-matrix-derived stochastic abstraction to enable closed-loop probabilistic  analysis, while Cleaveland \emph{et al.} \cite{cleaveland2025conservative} construct conservative IMDP       perception abstractions from finite data using confidence intervals and intra-bin probability enlargement for probabilistic model checking. Sch\"afers \emph{et al.} \cite{schaefers2026perceptionbased} use learned perception uncertainty models to drive POMDP belief updates from visual observations. \rev{Peper \emph{et al.} \cite{peper2025unified} likewise pursue PAC-style finite-sample guarantees for vision-based autonomy, unifying probabilistic verification and validation of learned perception components.} Our work adopts the same modeling move of summarizing perception by state-conditional emission probabilities, but differs in both objective and semantics: rather than verifying an offline closed-loop stochastic model, we propagate interval-valued belief envelopes online and use them to enforce conservative shielding under partial observability.                                                                                             

\orangechange{Adjacent work on predictive uncertainty under shift clarifies why we use interval bounds on the perception uncertainty model rather than conformal wrappers. Conformal prediction gives finite-sample marginal coverage for labels or outputs under exchangeability \cite{vovk2005algorithmic,angelopoulos2023conformal}, with extensions for covariate shift \cite{tibshirani2019conformal,jonkers2024wcps}, online drift \cite{gibbs2021adaptive}, hidden Markov models, and off-policy MDP evaluation \cite{nettasinghe2023hmmconformal,foffano2023conformalope}. Thus conformal methods are not ruled out merely by control-based shift. However, their certified objects are prediction sets, predictive distributions, or policy-value intervals, not entry-wise bounds on the latent-state perception model $Z(o\mid s)$ needed for belief propagation under deployment shift. Work using conformal wrappers for action shielding \cite{ScarbroImrie25} could be improved for control-based shift, but would still struggle when deployment changes the conditional emission probabilities themselves. Our method isolates uncertainty at that conditional level, tying the guarantee to probable realizations of per-state emission probabilities rather than the aggregate deployment state distribution. Ovadia \emph{et al.} \cite{ovadia2019can} further motivate this caution by showing that point-valued uncertainty estimates can degrade under shift.}

\orangechange{More broadly, our work sits in the shielding literature \cite{alshiekh2018safe,jansen2020safe,konighofer2020shielding} and uses standard POMDP belief semantics \cite{kaelbling1998planning,pineau2003point}. It combines robust/imprecise probabilistic modeling \cite{givan2000bounded,nilim2005robust,zaffalon2009imprecise}, Clopper--Pearson finite-sample intervals \cite{clopper1934use}, and state-based shields such as \PCIS \cite{gao2021pcis} or barrier-style invariance methods \cite{ames2019control} into a runtime imperfect-perception shield built by lifting a perfect-perception shield through interval-valued belief envelopes.}

\section{Conclusion}
We presented a shielding method for imperfect-perception agents modeled as IPOMDPs
where the perception uncertainty model is learned from finite data and represented by confidence intervals rather than point estimates. The method lifts a perfect-perception shield into belief space by propagating conservative belief envelopes and admitting only actions whose worst-case shield score remains above threshold. For the finite discrete models considered here, this gives a runtime realization of a fundamentally history-based shield that scales much better than explicit history reasoning. It yields a finite-sample outer guarantee on model correctness together with a conditional lower bound on runtime safety, while preserving a modular interface to upstream state-based shield constructions such as \PCIS.
Detailed proofs, expanded benchmark and protocol descriptions, the complementary stuck-avoidance summary, and additional diagnostic evaluation appear in the \supplementname.

The empirical picture is that interval belief envelopes occupy a useful middle ground between point-estimate methods and the support-based shield. Relative to single-belief baselines, they provide more reliable action correction under perception uncertainty; relative to the support-based shield, they remain usable in regimes where support abstractions collapse or become too coarse. \rev{At the same time, the experiments make the price of this additional fidelity visible: Envelope propagation is computationally demanding, infeasible on the largest benchmark settings, and sensitive to the coarseness of the template abstraction used to represent reachable beliefs. Fwd-Sampling is a scalable alternative that mimics some of the conservatism of LP envelope shielding when Envelope propagation is unavailable, but it is an under-approximation and does not provide the sound over-approximation guarantee. The benchmark comparison sharpens where this extra machinery matters: CartPole shows that it is unnecessary when observations already localize the latent state well, TaxiNet and Obstacle show the partially observable regimes where sound envelope propagation improves the safety-usability tradeoff, and Refuel illustrates the scalability limit of the strongest robust method.}

\section*{Artifact Availability}
The research artifact---source code, models, and scripts to reproduce all experiments---is archived on Zenodo at \mbox{\url{https://doi.org/10.5281/zenodo.21539083}}.

\section*{Acknowledgment}
The generative AI tool OpenAI Codex was used in the preparation of this manuscript to improve the clarity of the writing and to prepare figures for data presentation.

\bibliographystyle{IEEEtran}
\bibliography{references}

\begin{IEEEbiography}[{\includegraphics[width=1in,height=1.25in,clip,keepaspectratio]{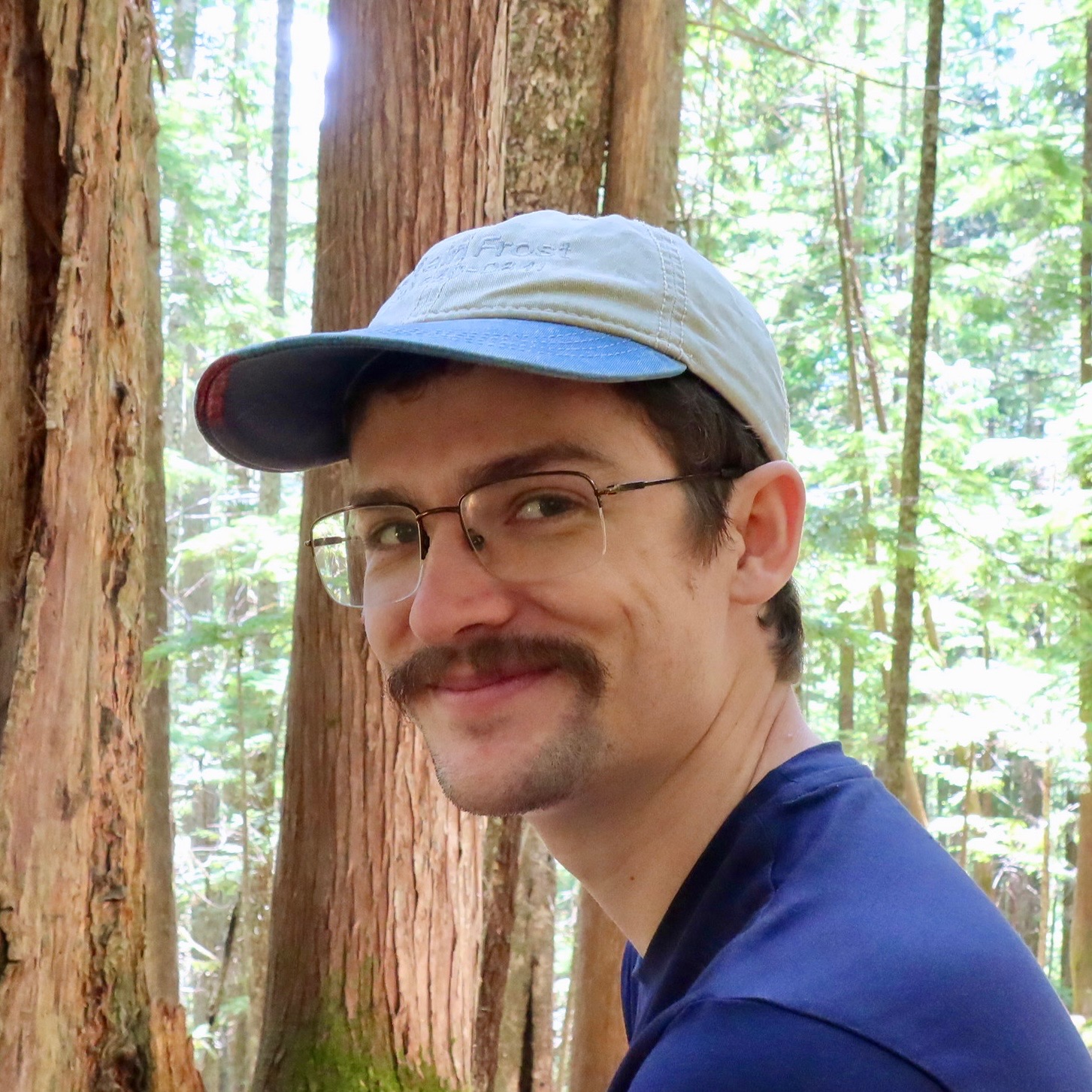}}]{William Scarbro}
completed the International Baccalaureate program in 2017. He received a B.S. in Computer Science and Applied Mathematics in 2021 and his M.S. in Computer Science from Colorado State University in 2024. William is currently a PhD candidate at Colorado State University using algebra to better understand AI-based control.
\end{IEEEbiography}

\begin{IEEEbiography}[{\includegraphics[width=1in,height=1.25in,clip,keepaspectratio]{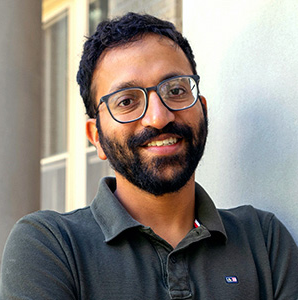}}]{Ravi Mangal}
received his PhD in Computer Science from the Georgia Institute of Technology in 2020 and was a Postdoctoral Researcher at CyLab, Carnegie Mellon University from 2021 to 2024. He is currently an Assistant Professor in the Department of Computer Science at Colorado State University. His research applies formal methods to the analysis of software systems with a focus on evaluating and improving the trustworthiness---safety, security, and interpretability---of AI-enabled systems.
\end{IEEEbiography}

\clearpage
\appendices
\setcounter{table}{0}
\renewcommand{\thetable}{S-\Roman{table}}
\section{\texorpdfstring{\rev{RL Controller and Adversarial-Perception Protocol}}{RL Controller and Adversarial-Perception Protocol}}
\label{sec:rl-controller-protocol}

\rev{The controller used in every Monte Carlo rollout in Section~VI is a Deep Q-Network trained directly on the \IPOMDP dynamics before shield evaluation, without a runtime shield in the loop. Its input is the last $W=10$ observation-action pairs, zero-padded when needed; flat tuple observations are encoded componentwise as numeric vectors, while non-tuple discrete observations and actions are one-hot encoded. The Q-network is a feedforward MLP with hidden layers $(128,64,32)$, ReLU activations, dropout $0.1$ after the first two hidden layers, and one output per action. Training uses replay capacity $50{,}000$, batch size $64$, Huber temporal-difference loss, Adam with learning rate $10^{-3}$, gradient clipping at $1.0$, discount factor $0.99$, and a target-network update every $500$ gradient steps. Exploration is $\epsilon$-greedy from $1.0$ with per-episode decay $0.995$ and floor $0.01$. Rewards are $+0.1$ per surviving step, $+10$ for completing the horizon without \texttt{FAIL}, and $-10$ for a \texttt{FAIL} transition. We use $500$ DQN training episodes for TaxiNet, Obstacle, low-accuracy CartPole, and the modified Refuel benchmark, and $300$ for the standard CartPole auxiliary runs.}

\rev{At evaluation time the DQN's exploration rate is zero. The deployed selector asks the DQN for its preferred action; if the shield admits it, the action is executed, and otherwise a uniformly random admissible action is executed. If no action is admissible, the rollout is recorded as stuck. The adversarial-perception regime is built with the same shield-compliant controller by optimizing one fixed observation-kernel realization inside the interval bounds. A candidate realization uses parameters $\alpha\in[0,1]^{|S|\times|O|}$ to interpolate between $\underline{Z}(o\mid s)$ and $\overline{Z}(o\mid s)$, followed by bounded-simplex projection for each row. We optimize these parameters with the Cross-Entropy Method. In each iteration, the optimizer samples a batch of candidate realizations from a clipped Gaussian, scores each candidate by Monte Carlo rollouts, and refits the Gaussian to the highest-failure candidates. For TaxiNet, Obstacle, and the modified Refuel benchmark, each iteration samples \(10\) candidate realizations and scores each candidate with \(5\) rollouts; we run \(10\) iterations, with initial standard deviation \(0.3\) and decay \(0.95\). CartPole uses the same \(5\) rollouts per candidate but \(8\) candidates and \(8\) iterations. The fixed realization is optimized against Envelope where feasible, then cached and reused across adversarial evaluation experiments.}

\section{Additional Empirical Results}

\begin{table*}[t]
\centering
\rev{\caption{Data underlying Fig.~1 of the main paper. Each entry reports the empirical percentage and a two-sided 95\% Clopper--Pearson interval over 200 rollouts for the corresponding marginal outcome.}
\label{tab:summary-lowfail-values}
\scriptsize
\begin{tabular}{lllcccc}
\toprule
Case study & Regime & Shield & Setting & Fail \% & Stuck \% & Safe \%\\
\midrule
TaxiNet & Uniform & Envelope & $\beta=0.90$ & $25.5$ [20,32] & $21.5$ [16,28] & $53$ [46,60]\\
TaxiNet & Uniform & Single-Belief & $\beta=0.95$ & $35.5$ [29,43] & $15.5$ [11,21] & $49$ [42,56]\\
TaxiNet & Uniform & Observation & $\beta=0.95$ & $8.5$ [5,13] & $91.5$ [87,95] & $0$ [0,2]\\
TaxiNet & Uniform & Carr & -- & $9$ [5,14] & $91$ [86,95] & $0$ [0,2]\\
TaxiNet & Uniform & Fwd-Sampling & $\beta=0.95$ & $26.5$ [21,33] & $26.5$ [21,33] & $47$ [40,54]\\
TaxiNet & Adversarial & Envelope & $\beta=0.95$ & $27$ [21,34] & $34.5$ [28,42] & $38.5$ [32,46]\\
TaxiNet & Adversarial & Single-Belief & $\beta=0.95$ & $37.5$ [31,45] & $14$ [10,20] & $48.5$ [41,56]\\
TaxiNet & Adversarial & Observation & $\beta=0.95$ & $8.5$ [5,13] & $91.5$ [87,95] & $0$ [0,2]\\
TaxiNet & Adversarial & Carr & -- & $3.5$ [1,7] & $96.5$ [93,99] & $0$ [0,2]\\
TaxiNet & Adversarial & Fwd-Sampling & $\beta=0.95$ & $34.5$ [28,42] & $22$ [16,28] & $43.5$ [37,51]\\
\addlinespace[1pt]
Obstacle & Uniform & Envelope & $\beta=0.95$ & $5.5$ [3,10] & $80.5$ [74,86] & $14$ [10,20]\\
Obstacle & Uniform & Single-Belief & $\beta=0.95$ & $16$ [11,22] & $52.5$ [45,60] & $31.5$ [25,38]\\
Obstacle & Uniform & Observation & $\beta=0.95$ & $1$ [0,4] & $99$ [96,100] & $0$ [0,2]\\
Obstacle & Uniform & Carr & -- & $1$ [0,4] & $99$ [96,100] & $0$ [0,2]\\
Obstacle & Uniform & Fwd-Sampling & $\beta=0.95$ & $8.5$ [5,13] & $86$ [80,90] & $5.5$ [3,10]\\
Obstacle & Adversarial & Envelope & $\beta=0.95$ & $7.5$ [4,12] & $78.5$ [72,84] & $14$ [10,20]\\
Obstacle & Adversarial & Single-Belief & $\beta=0.95$ & $12$ [8,17] & $56.5$ [49,63] & $31.5$ [25,38]\\
Obstacle & Adversarial & Observation & $\beta=0.95$ & $2$ [1,5] & $98$ [95,99] & $0$ [0,2]\\
Obstacle & Adversarial & Carr & -- & $2$ [1,5] & $98$ [95,99] & $0$ [0,2]\\
Obstacle & Adversarial & Fwd-Sampling & $\beta=0.95$ & $8.5$ [5,13] & $79$ [73,84] & $12.5$ [8,18]\\
\addlinespace[1pt]
CartPole & Uniform & Single-Belief & $\beta=0.95$ & $1.5$ [0,4] & $0$ [0,2] & $98.5$ [96,100]\\
CartPole & Uniform & Observation & $\beta=0.85$ & $1.5$ [0,4] & $0$ [0,2] & $98.5$ [96,100]\\
CartPole & Uniform & Carr & -- & $1.5$ [0,4] & $0$ [0,2] & $98.5$ [96,100]\\
CartPole & Uniform & Fwd-Sampling & $\beta=0.80$ & $0.5$ [0,3] & $0$ [0,2] & $99.5$ [97,100]\\
CartPole & Adversarial & Single-Belief & $\beta=0.95$ & $3$ [1,6] & $0$ [0,2] & $97$ [94,99]\\
CartPole & Adversarial & Observation & $\beta=0.85$ & $1.5$ [0,4] & $0$ [0,2] & $98.5$ [96,100]\\
CartPole & Adversarial & Carr & -- & $1.5$ [0,4] & $0$ [0,2] & $98.5$ [96,100]\\
CartPole & Adversarial & Fwd-Sampling & $\beta=0.60$ & $1.5$ [0,4] & $0$ [0,2] & $98.5$ [96,100]\\
\addlinespace[1pt]
Refuel & Uniform & Single-Belief & $\beta=0.90$ & $0$ [0,2] & $84$ [78,89] & $16$ [11,22]\\
Refuel & Uniform & Observation & $\beta=0.90$ & $0$ [0,2] & $97.5$ [94,99] & $2.5$ [1,6]\\
Refuel & Uniform & Fwd-Sampling & $\beta=0.95$ & $0$ [0,2] & $100$ [98,100] & $0$ [0,2]\\
Refuel & Adversarial & Single-Belief & $\beta=0.75$ & $0$ [0,2] & $58.5$ [51,65] & $41.5$ [35,49]\\
Refuel & Adversarial & Observation & $\beta=0.85$ & $0.5$ [0,3] & $94.5$ [90,97] & $5$ [2,9]\\
Refuel & Adversarial & Fwd-Sampling & $\beta=0.85$ & $0$ [0,2] & $89$ [84,93] & $11$ [7,16]\\
\bottomrule
\end{tabular}
}
\end{table*}

\begin{table*}[t]
\centering
\rev{\caption{Data underlying Fig.~2 of the main paper. Each entry reports the empirical percentage and a two-sided 95\% Clopper--Pearson interval over 200 rollouts for the corresponding marginal outcome.}
\label{tab:summary-highsafe-values}
\scriptsize
\begin{tabular}{lllcccc}
\toprule
Case study & Regime & Shield & Setting & Fail \% & Stuck \% & Safe \%\\
\midrule
TaxiNet & Uniform & Envelope & $\beta=0.90$ & $25.5$ [20,32] & $21.5$ [16,28] & $53$ [46,60]\\
TaxiNet & Uniform & Single-Belief & $\beta=0.90$ & $36$ [29,43] & $1$ [0,4] & $63$ [56,70]\\
TaxiNet & Uniform & Observation & $\beta=0.85$ & $64$ [57,71] & $0$ [0,2] & $36$ [29,43]\\
TaxiNet & Uniform & Carr & -- & $9$ [5,14] & $91$ [86,95] & $0$ [0,2]\\
TaxiNet & Uniform & Fwd-Sampling & $\beta=0.65$ & $39$ [32,46] & $1$ [0,4] & $60$ [53,67]\\
TaxiNet & Adversarial & Envelope & $\beta=0.85$ & $35$ [28,42] & $9.5$ [6,14] & $55.5$ [48,63]\\
TaxiNet & Adversarial & Single-Belief & $\beta=0.90$ & $38.5$ [32,46] & $1.5$ [0,4] & $60$ [53,67]\\
TaxiNet & Adversarial & Observation & $\beta=0.70$ & $69$ [62,75] & $0$ [0,2] & $31$ [25,38]\\
TaxiNet & Adversarial & Carr & -- & $3.5$ [1,7] & $96.5$ [93,99] & $0$ [0,2]\\
TaxiNet & Adversarial & Fwd-Sampling & $\beta=0.70$ & $42.5$ [36,50] & $2.5$ [1,6] & $55$ [48,62]\\
\addlinespace[1pt]
Obstacle & Uniform & Envelope & $\beta=0.80$ & $20.5$ [15,27] & $48.5$ [41,56] & $31$ [25,38]\\
Obstacle & Uniform & Single-Belief & $\beta=0.90$ & $30$ [24,37] & $28$ [22,35] & $42$ [35,49]\\
Obstacle & Uniform & Observation & $\beta=0.90$ & $12.5$ [8,18] & $78$ [72,84] & $9.5$ [6,14]\\
Obstacle & Uniform & Carr & -- & $1$ [0,4] & $99$ [96,100] & $0$ [0,2]\\
Obstacle & Uniform & Fwd-Sampling & $\beta=0.85$ & $21.5$ [16,28] & $46$ [39,53] & $32.5$ [26,39]\\
Obstacle & Adversarial & Envelope & $\beta=0.80$ & $19$ [14,25] & $49$ [42,56] & $32$ [26,39]\\
Obstacle & Adversarial & Single-Belief & $\beta=0.90$ & $25$ [19,32] & $29$ [23,36] & $46$ [39,53]\\
Obstacle & Adversarial & Observation & $\beta=0.90$ & $9.5$ [6,14] & $76.5$ [70,82] & $14$ [10,20]\\
Obstacle & Adversarial & Carr & -- & $2$ [1,5] & $98$ [95,99] & $0$ [0,2]\\
Obstacle & Adversarial & Fwd-Sampling & $\beta=0.80$ & $25$ [19,32] & $35.5$ [29,43] & $39.5$ [33,47]\\
\addlinespace[1pt]
CartPole & Uniform & Single-Belief & $\beta=0.95$ & $1.5$ [0,4] & $0$ [0,2] & $98.5$ [96,100]\\
CartPole & Uniform & Observation & $\beta=0.85$ & $1.5$ [0,4] & $0$ [0,2] & $98.5$ [96,100]\\
CartPole & Uniform & Carr & -- & $1.5$ [0,4] & $0$ [0,2] & $98.5$ [96,100]\\
CartPole & Uniform & Fwd-Sampling & $\beta=0.80$ & $0.5$ [0,3] & $0$ [0,2] & $99.5$ [97,100]\\
CartPole & Adversarial & Single-Belief & $\beta=0.95$ & $3$ [1,6] & $0$ [0,2] & $97$ [94,99]\\
CartPole & Adversarial & Observation & $\beta=0.85$ & $1.5$ [0,4] & $0$ [0,2] & $98.5$ [96,100]\\
CartPole & Adversarial & Carr & -- & $1.5$ [0,4] & $0$ [0,2] & $98.5$ [96,100]\\
CartPole & Adversarial & Fwd-Sampling & $\beta=0.60$ & $1.5$ [0,4] & $0$ [0,2] & $98.5$ [96,100]\\
\addlinespace[1pt]
Refuel & Uniform & Single-Belief & $\beta=0.50$ & $0.5$ [0,3] & $37.5$ [31,45] & $62$ [55,69]\\
Refuel & Uniform & Observation & $\beta=0.50$ & $3$ [1,6] & $0$ [0,2] & $97$ [94,99]\\
Refuel & Uniform & Fwd-Sampling & $\beta=0.60$ & $0.5$ [0,3] & $81$ [75,86] & $18.5$ [13,25]\\
Refuel & Adversarial & Single-Belief & $\beta=0.50$ & $2.5$ [1,6] & $34$ [27,41] & $63.5$ [56,70]\\
Refuel & Adversarial & Observation & $\beta=0.50$ & $2$ [1,5] & $0$ [0,2] & $98$ [95,99]\\
Refuel & Adversarial & Fwd-Sampling & $\beta=0.50$ & $2$ [1,5] & $72$ [65,78] & $26$ [20,33]\\
\bottomrule
\end{tabular}
}
\end{table*}

\begin{table*}[t]
\centering
\rev{	\caption{Data underlying Fig.~4 of the main paper. Each entry reports the empirical percentage and a two-sided 95\% Clopper--Pearson interval for the corresponding marginal outcome. For each panel we report the per-method operating point selected by that panel's criterion. Each entry uses $n=200$ rollouts. Interval-belief shields (Single-Belief, Envelope, Fwd-Sampling) are swept over \(\beta\); the Conformal baseline is swept over confidence level \(\kappa\).}
\label{tab:taxinetsim-lowfail-values}
\scriptsize
\begin{tabular}{lllccc}
\toprule
Panel & Shield & Setting & Fail \% & Stuck \% & Safe \%\\
\midrule
Lowest Failure / Uniform & Single-Belief & $\beta=0.95$ & $47$ [37,57] & $15$ [9,24] & $38$ [28,48]\\
 & Envelope & $\beta=0.90$ & $32$ [23,42] & $46$ [36,56] & $22$ [14,31]\\
 & Fwd-Sampling & $\beta=0.80$ & $48$ [38,58] & $26$ [18,36] & $26$ [18,36]\\
 & Conformal & $\kappa=0.995$ & $12.5$ [8,18] & $85.5$ [80,90] & $2$ [1,5]\\
\addlinespace[2pt]
Lowest Failure / Adversarial & Single-Belief & $\beta=0.90$ & $56$ [46,66] & $3$ [1,9] & $41$ [31,51]\\
 & Envelope & $\beta=0.90$ & $39$ [29,49] & $41$ [31,51] & $20$ [13,29]\\
 & Fwd-Sampling & $\beta=0.95$ & $47$ [37,57] & $29$ [20,39] & $24$ [16,34]\\
 & Conformal & $\kappa=0.995$ & $15.5$ [11,21] & $84$ [78,89] & $0.5$ [0,3]\\
\addlinespace[2pt]
Highest Safe / Uniform & Single-Belief & $\beta=0.95$ & $47$ [37,57] & $15$ [9,24] & $38$ [28,48]\\
 & Envelope & $\beta=0.90$ & $32$ [23,42] & $46$ [36,56] & $22$ [14,31]\\
 & Fwd-Sampling & $\beta=0.70$ & $50$ [40,60] & $21$ [13,30] & $29$ [20,39]\\
 & Conformal & $\kappa=0.95$ & $75$ [68,81] & $0$ [0,2] & $25$ [19,32]\\
\addlinespace[2pt]
Highest Safe / Adversarial & Single-Belief & $\beta=0.90$ & $56$ [46,66] & $3$ [1,9] & $41$ [31,51]\\
 & Envelope & $\beta=0.80$ & $40$ [30,50] & $37$ [28,47] & $23$ [15,32]\\
 & Fwd-Sampling & $\beta=0.50$ & $54$ [44,64] & $17$ [10,26] & $29$ [20,39]\\
 & Conformal & $\kappa=0.95$ & $74.5$ [68,80] & $0$ [0,2] & $25.5$ [20,32]\\
\bottomrule
\end{tabular}
}
\end{table*}

\subsection{\texorpdfstring{\rev{Comparison Experiment with Conformal Baseline}}{Comparison Experiment with Conformal Baseline}}
\label{subsec:conformal-perception-model}

\rev{The conformal comparison uses the same TaxiNet-style taxiing dynamics, signed cross-track-error and heading-error state abstraction, action space, and trained controller as the main TaxiNet experiment. It differs only in the perception artifact: the main TaxiNet results use real autonomous-taxiing perception-confusion data from~\cite{pasareanu2023closedloop}, while this comparison uses the simulated taxiing artifact from~\cite{ScarbroImrie25}. That artifact provides paired network point estimates and conformal prediction sets for each true discretized state. The conformal baseline samples such a set and filters actions that would be unsafe in any state in the Cartesian product of the cross-track-error and heading-error sets.}

\rev{The two shield families read different perception inputs: our shields act on a point estimate, while the conformal shield acts on a prediction set. To avoid comparing different perception models, we start from the same trained TaxiNet network evaluation used by the prior conformal work~\cite{ScarbroImrie25} and read both inputs from it. For each true state \(s\), the network's predicted class gives the point-estimate sample \(\hat{s}\), and the conformalizer gives the set sample \(C_\kappa\). From the point-estimate samples we rebuild the TaxiNet \IPOMDP perception model exactly as in the main experiments: for each state-observation pair we estimate \(Z(\hat{s}\mid s)\) and construct the corresponding finite-sample interval bounds. Single-Belief uses the point estimate derived from this model, while Envelope and Fwd-Sampling use the interval bounds during belief propagation.}

\rev{The conformal-set samples define a second, conditional channel \(Q_\kappa(C\mid s,\hat{s})\), implemented axis-wise for cross-track error and heading error. At runtime we first sample the point estimate from the realized point-estimate channel and then sample \(C_\kappa\sim Q_\kappa(\cdot\mid s,\hat{s})\), equivalently \(P(\hat{s},C_\kappa\mid s)=Z(\hat{s}\mid s)Q_\kappa(C_\kappa\mid s,\hat{s})\). This preserves the input correlation and keeps the point estimate inside its conformal set.}

\rev{The uniform and adversarial regimes differ only in the realized point-estimate channel. In the uniform regime, each rollout samples a valid \(Z\) realization inside the rebuilt interval bounds and uses it throughout the trajectory; the conformal set is then drawn conditionally from \(Q_\kappa\). In the adversarial regime, the optimized fixed realization of \(Z\) is used instead, shared with the interval-belief shields. The conformal rates are therefore not adversarially optimized directly, but the conformal input inherits adversarial pressure through the realized point estimate. For Single-Belief, Envelope, and Fwd-Sampling we sweep \(\beta\in\{0.5,0.6,0.7,0.8,0.9,0.95\}\). For the conformal shield we sweep \(\kappa\in\{0.95,0.99,0.995\}\). Each configuration uses 200 Monte Carlo rollouts of horizon 20 under uniform and adversarial perception.}

\begin{figure}[t]
\centering
\includegraphics[width=0.98\columnwidth]{pareto_v7_taxinet.png}\caption{TaxiNet Pareto scatter over threshold $\beta$ under adversarial perception. History-based shields provide substantially better fail-versus-stuck tradeoffs than memoryless Observation, and the envelope improves on Single-Belief by removing actions that are only safe for the point-estimate model.}
\label{fig:pareto-taxinet-appendix}
\end{figure}

\subsection{Why History Matters: TaxiNet and Obstacle}
\orangechange{TaxiNet demonstrates why the observation count alone is a poor proxy for difficulty. The benchmark has 16 observation labels for 16 states, but the learned perception model still leaves substantial overlap between the posteriors induced by different runway positions. A single noisy observation is therefore consistent with several states that require different steering corrections. Figure~\ref{fig:pareto-taxinet-appendix} shows the resulting threshold sweep: Observation can reduce failures only by blocking almost all actions, whereas all three history-based methods achieve interior tradeoff points. The Envelope Shield dominates Single-Belief in safety because the point-estimate filter is systematically optimistic in the presence of broad observation intervals. Fwd-Sampling still comes much closer to the envelope than Single-Belief does: at its best point it reaches $26.5\%$ fail and $26.5\%$ stuck under uniform perception, versus $25.5\%$ fail and $21.5\%$ stuck for Envelope and $35.5\%$ fail and $15.5\%$ stuck for Single-Belief. By maintaining many belief points and many probability samples, Fwd-Sampling exposes substantially more uncertainty than a single posterior, but as an inner approximation it can still miss the exact worst-case belief corners that the LP envelope captures.}

\orangechange{The remaining TaxiNet failure rate is also informative. Even at $\beta=0.95$, the Envelope Shield still fails in roughly one third of uniform-perception runs and just over one quarter of adversarial-perception runs. This is not obviously an artifact of the LP relaxation alone; rather, the results suggest that it reflects the intrinsic difficulty of a runway-alignment task in which observation noise still leaves several plausible latent states after one sensor reading. The shield can prevent overconfident corrections, but it cannot recover information that the sensor channel never supplied.}

\begin{figure}[t]
\centering
\includegraphics[width=0.98\columnwidth]{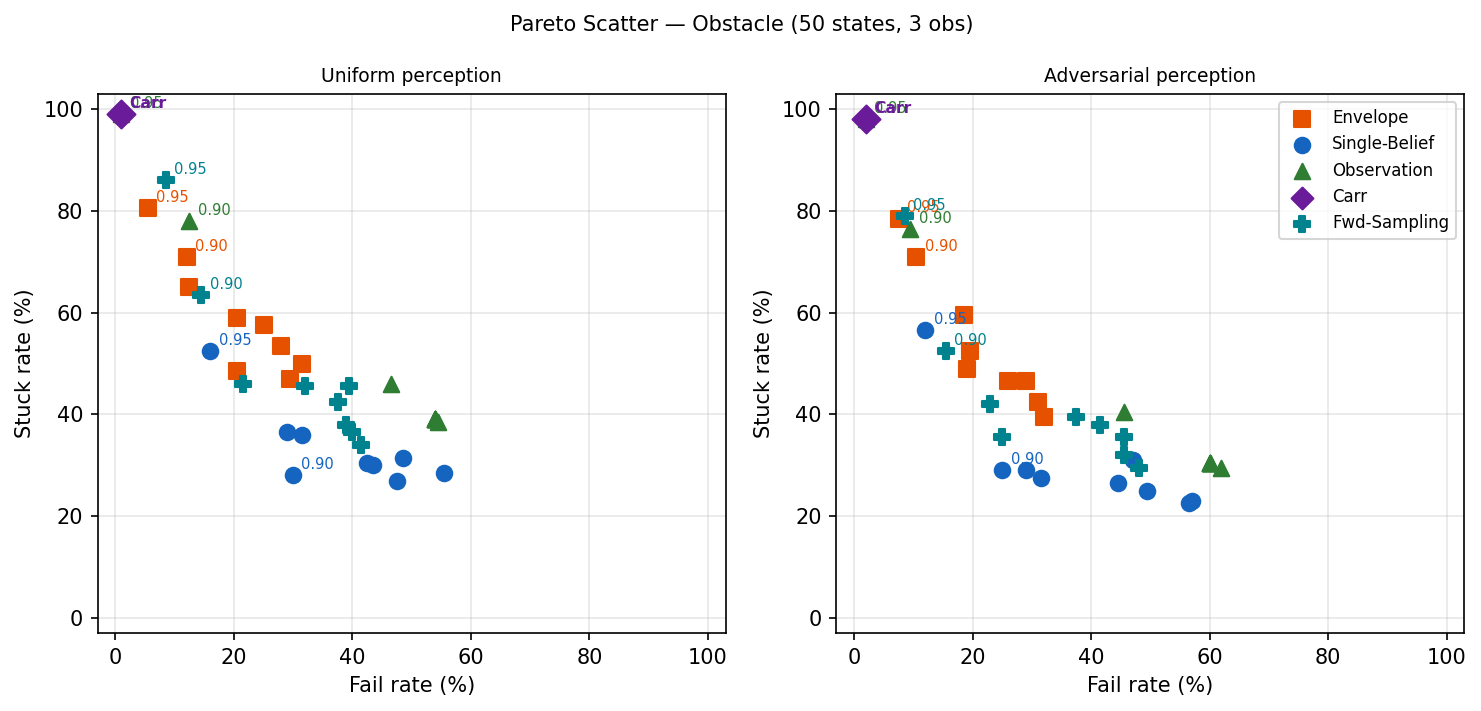}\caption{Obstacle Pareto scatter over threshold $\beta$ under adversarial perception. With only three observations for fifty states, memoryless posterior reasoning collapses toward the same extreme point as support-only shielding. Belief-history remains essential, and interval reasoning improves safety at every threshold.}
\label{fig:pareto-obstacle}
\end{figure}

Obstacle pushes the aliasing effect further. With only 3 observations for 50 states, each observation aggregates many states with incompatible safe actions. Figure~\ref{fig:pareto-obstacle} shows that memoryless Observation and Carr are then driven to the same corner of the tradeoff curve: around $2\%$ fail, but at roughly $98\%$ stuck. Belief history breaks this deadlock because the action-observation sequence rules out large parts of the support even when the current observation does not. Single-Belief exploits that history, Fwd-Sampling adds a substantial extra layer of conservatism, and the envelope improves on both at every threshold by treating interval uncertainty adversarially rather than as a point estimate. At the best threshold, Fwd-Sampling reaches $7\%$ fail and $84\%$ stuck, much closer to Envelope's $3\%/85\%$ than to Single-Belief's $14\%/50\%$. This case suggests that in heavily aliased perception problems, quantitative belief-mass reasoning is not just more fine grained than support reasoning; it changes which shields are usable at all.

\subsection{Near-Bijective and Large-Scale Regimes}
CartPole and Refuel delimit the other two operating regimes. CartPole uses deliberately degraded perception, but it still has 82 observation labels for 82 states and only two actions. As a result, there are few opportunities for the shield to become stuck: at the best threshold, Single-Belief achieves about $2\%$ fail and $0\%$ stuck, Observation about $3\%$ fail and $0\%$ stuck, Carr about $3\%$ fail and $0\%$ stuck, and Fwd-Sampling about $2\%$ fail and $0\%$ stuck in the uniform regime. In this regime, the first observation already localizes the state well enough that history adds little. The experiment therefore acts as a sanity check: when interval uncertainty does not create substantial ambiguity, the heavier envelope machinery is unnecessary.

Refuel is the opposite kind of stress test. The observation model intentionally hides the safety-critical variables, so danger must be inferred indirectly from history rather than read from a single observation. The Envelope Shield is excluded there because LP-based online propagation is infeasible at this scale, and Carr is excluded because our modified Refuel observation model hides the safety-critical variables, which greatly increases support aliasing and makes the offline support-MDP breadth-first exploration intractable. Still, the comparison among the three feasible baselines is revealing. Single-Belief is the strongest lightweight zero-failure shield, reaching $0\%$ fail at about $79\%$ stuck under uniform perception and $58\%$ stuck under the adversarial regime. Observation is more permissive and uniquely attains a $0\%$-stuck operating point at lower thresholds, but does so by accepting a nontrivial failure rate ($3$--$4.5\%$). Fwd-Sampling does not match Single-Belief on Refuel: at its best low-failure point it reaches $0\%$ fail but about $98\%$ stuck under uniform perception and $93\%$ stuck under adversarial perception. The lesson is not that memoryless or sampled shielding is preferable on Refuel; rather, it shows that large hidden-state problems force a three-way tradeoff between safety, stuck avoidance, and online computational budget.

\subsection{\texorpdfstring{\rev{Sensitivity to the Confidence Level}}{Sensitivity to the Confidence Level}}
\rev{The significance level $\alpha$ controls the width of the Clopper--Pearson intervals used to build the \IPOMDP: smaller $\alpha$ yields wider intervals and therefore exposes the shield to a larger set of admissible observation kernels. To isolate this effect, we ran an additional TaxiNet sweep over $\alpha\in\{0.01,0.025,0.05,0.075,0.10,0.15,0.20,0.30\}$ and $\beta\in\{0.7,0.8,0.9\}$ using Single-Belief, Envelope, and Fwd-Sampling. Each configuration aggregates 200 trials, and the adversarial observation realization is optimized against the Envelope shield at the same $\beta$ and then reused for the other shields.}

\begin{figure*}[t]
\centering
\includegraphics[width=0.92\textwidth]{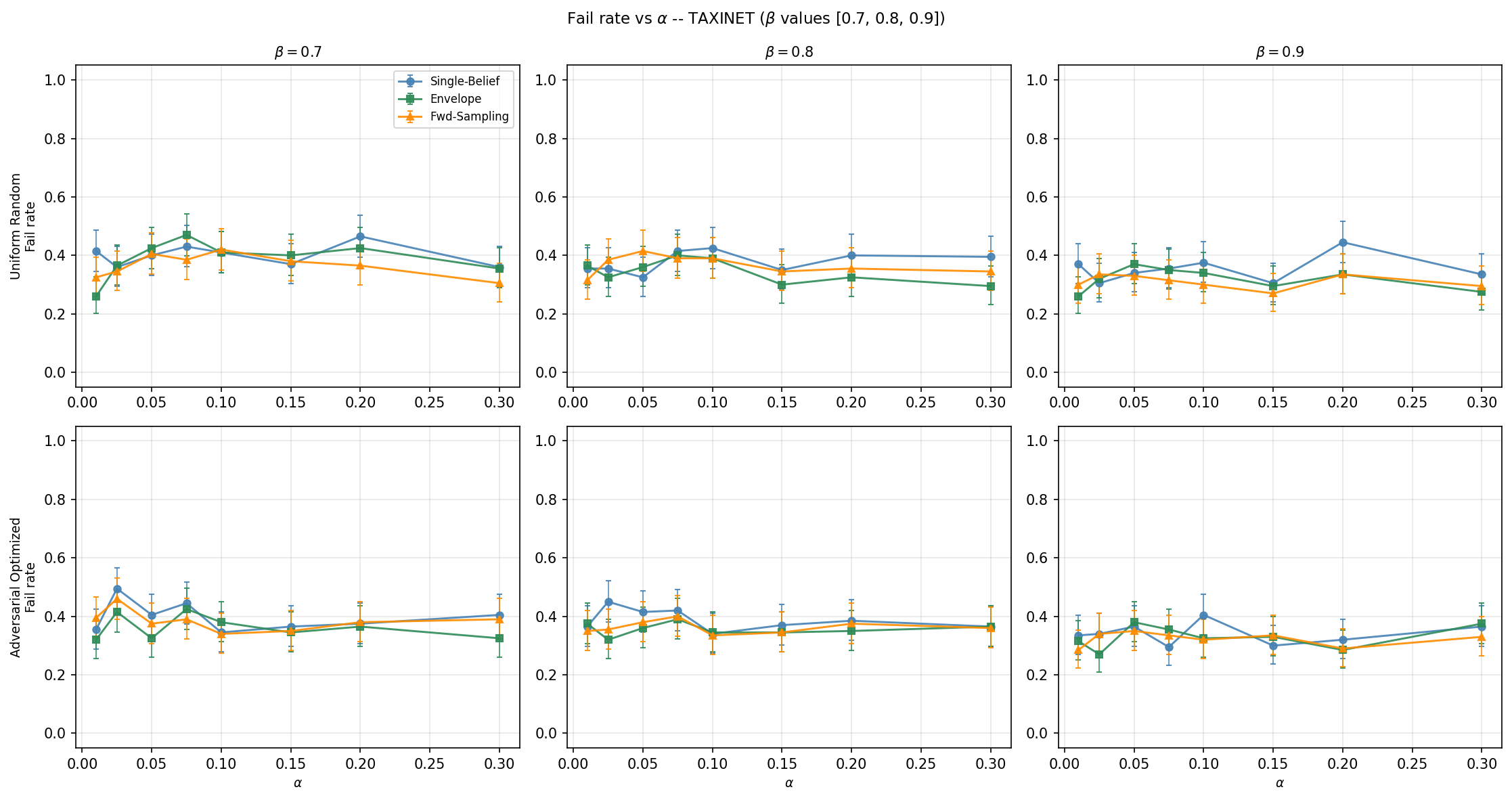}
	\caption{\rev{TaxiNet fail rate as a function of $\alpha$ over three shield thresholds. Rows show the uniform and adversarial perception regimes; columns show $\beta\in\{0.7,0.8,0.9\}$. Error bars show 95\% Clopper--Pearson intervals over 200 trials per configuration.}}
\label{fig:alpha-sweep-taxinet-fail}
\end{figure*}

\begin{figure*}[t]
\centering
\includegraphics[width=0.92\textwidth]{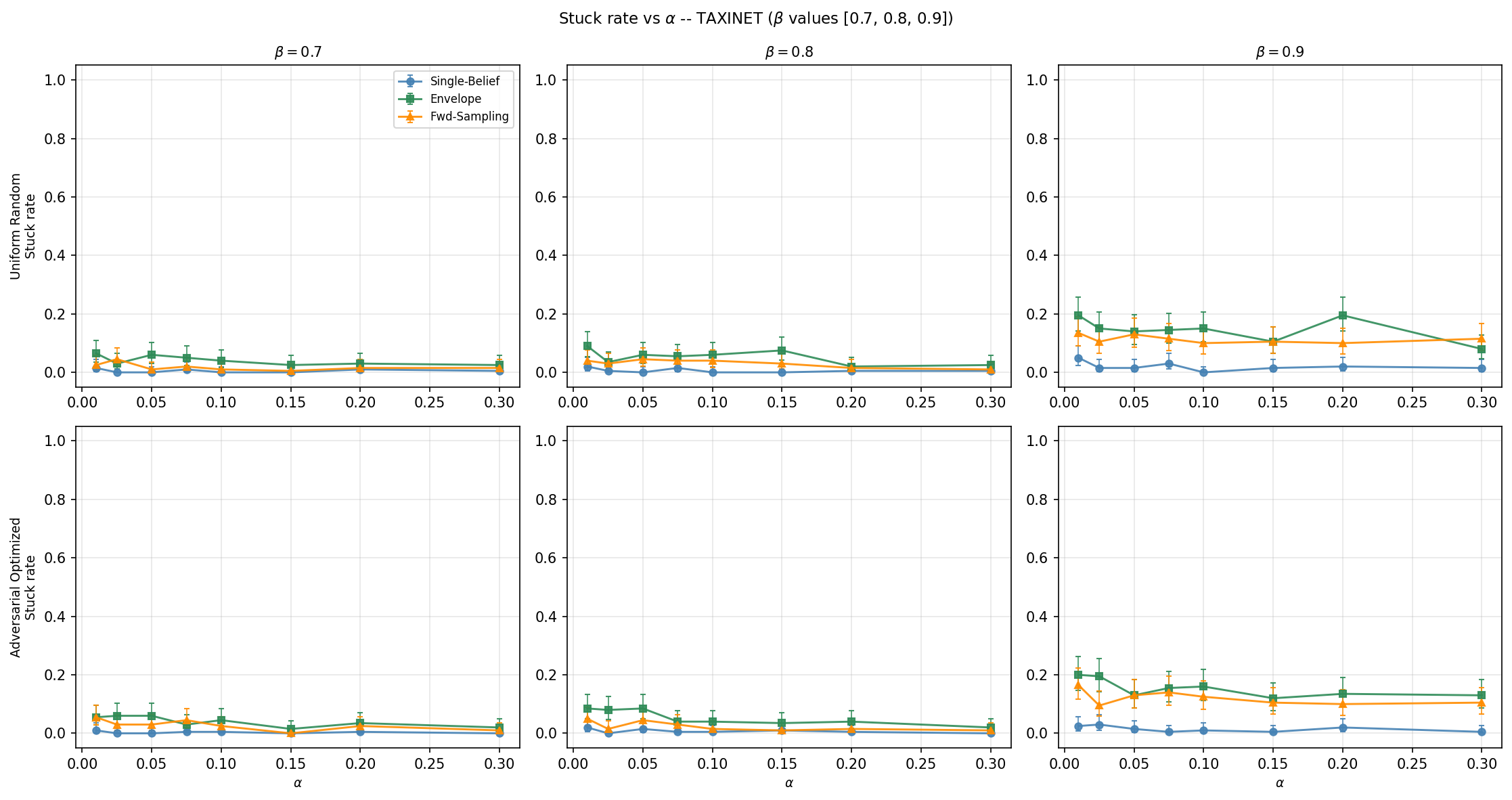}
\caption{\rev{TaxiNet stuck rate as a function of $\alpha$ over three shield thresholds. Rows show the uniform and adversarial perception regimes; columns show $\beta\in\{0.7,0.8,0.9\}$. Error bars show 95\% Clopper--Pearson intervals over 200 trials per configuration.}}
\label{fig:alpha-sweep-taxinet-stuck}
\end{figure*}

\begin{figure*}[t]
\centering
\includegraphics[width=0.92\textwidth]{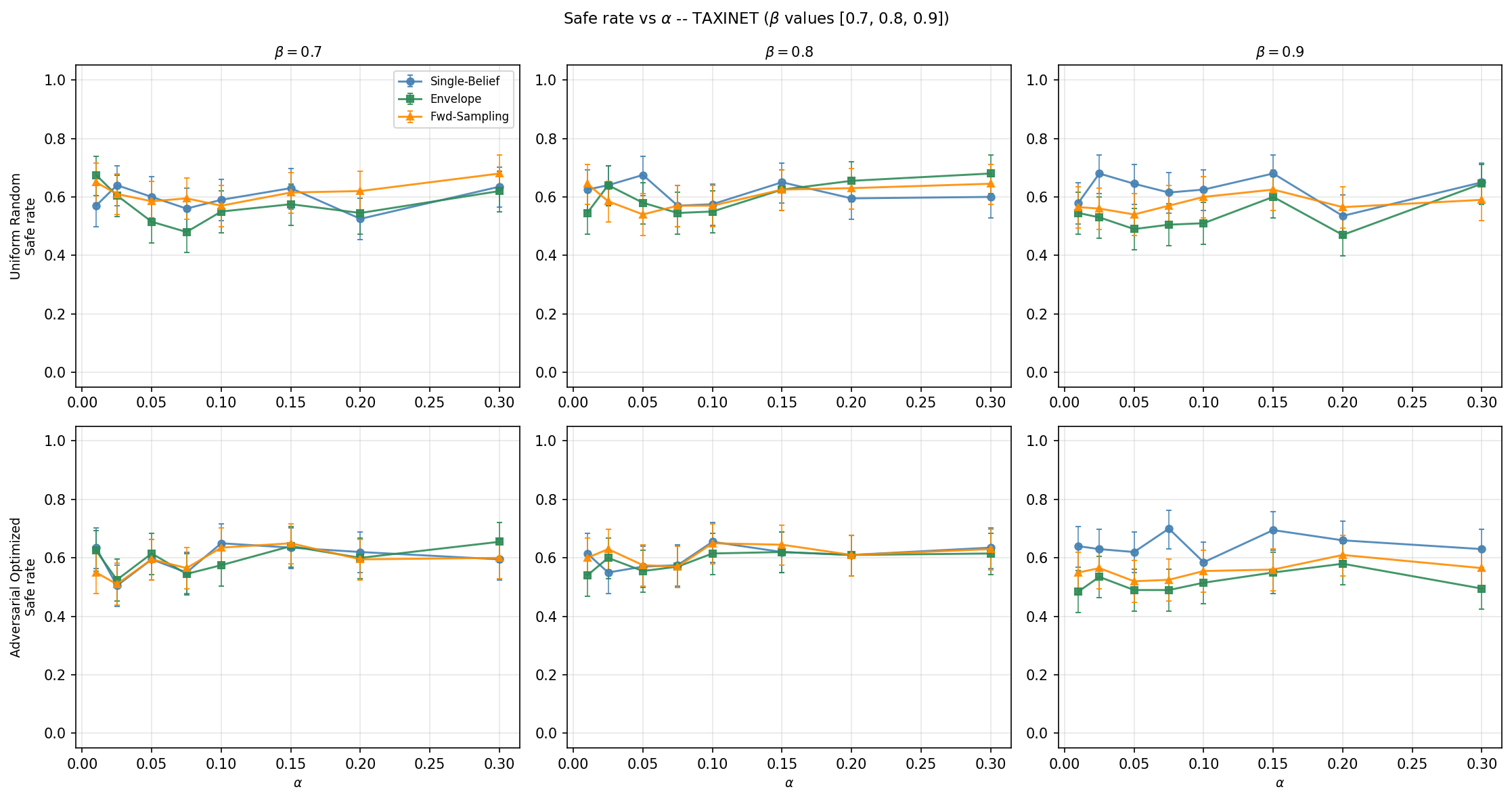}
\caption{\rev{TaxiNet safe-completion rate as a function of $\alpha$ over three shield thresholds. Rows show the uniform and adversarial perception regimes; columns show $\beta\in\{0.7,0.8,0.9\}$. Error bars show 95\% Clopper--Pearson intervals over 200 trials per configuration.}}
\label{fig:alpha-sweep-taxinet-safe}
\end{figure*}

\rev{Figures~\ref{fig:alpha-sweep-taxinet-fail}--\ref{fig:alpha-sweep-taxinet-safe} show that $\beta$ is the dominant tuning knob. At $\beta=0.7$ and $\beta=0.8$, the method ordering and fail--stuck tradeoff change only modestly as $\alpha$ varies, and the stuck rates for Single-Belief remain near zero. At $\beta=0.9$, $\alpha$ has a visible but still secondary effect: under adversarial perception the lowest-failure Envelope configuration occurs at \(\alpha=0.025\), reaching \(27.0\%\) fail, \(19.5\%\) stuck, and \(53.5\%\) safe; Fwd-Sampling's lowest-failure point occurs at \(\alpha=0.01\), reaching \(28.5\%\) fail, \(16.5\%\) stuck, and \(55.0\%\) safe; and Single-Belief reaches \(29.5\%\) fail, \(0.5\%\) stuck, and \(70.0\%\) safe at \(\alpha=0.075\). Thus the refreshed sweep narrows the gap among the three shields in this auxiliary TaxiNet setting and makes the safety--liveness tradeoff more visible: interval methods still reduce failures slightly at high \(\beta\), but Single-Belief retains substantially lower stuck rate. Overall, the sweep supports the interpretation that $\alpha$ mainly fine-tunes interval conservatism within a region selected by $\beta$, while the broad operating regime is governed by the runtime shield threshold.}

\subsection{Measured Conservatism of the Envelope}
The runtime envelope is intentionally an over-approximation, so an important question is whether its conservatism is measurable rather than merely asserted. \rev{A second, related question is how much of that conservatism comes from over-approximating the reachable belief set, and how much comes from the interval semantics for perception. In that semantics, the observation kernel \(Z_t\) used in each Bayes update may vary with time, provided every row remains row-stochastic and within the learned bounds \([Z^-,Z^+]\). This is the semantics used by the LFP envelope and by Fwd-Sampling; it is more general than assuming that one admissible perception kernel is sampled once and reused for the whole trajectory.} We study \rev{both} on TaxiNet by comparing the LFP envelope against \rev{two different} forward-sampled under-approximations of reachable beliefs\rev{: a \emph{time-varying} sampler that resamples admissible perception probabilities at each belief update, and a \emph{fixed-probability} sampler that draws one admissible perception kernel and reuses it for all belief updates along that trajectory. The Monte Carlo rollout histories are the same in both diagnostics; only the forward-sampled belief set used for comparison changes.} For each action under evaluation $a$ and belief $b$, let $P(b,\mathrm{allowed}(a))$ denote the belief-weighted mass on true states for which that action is permitted by the perfect-perception shield. Label $\mathcal{B}^{Fwd}$ the belief set constructed through forward sampling \rev{(in either variant)} and recall $\hat{\mathcal{B}}$ is constructed through LFP. For each trajectory \rev{and each sampler} we compute
\[
\mathrm{gap}= \min_{b\in\mathcal{B}^{Fwd}}~P(b,\mathrm{allowed(a)})-\min_{b\in\hat{\mathcal{B}}}~P(b,\mathrm{allowed(a)})\ge 0,
\]
where the sampled set under-approximates the true reachable beliefs and the LFP polytope over-approximates them. Consequently, this measured gap should be read as a sampling-based diagnostic rather than an exact coarseness measure: because the sampled set may miss additional low-score beliefs, it need not coincide with the gap to the exact reachable set.

\begin{figure}[t]
\centering
\includegraphics[width=0.98\columnwidth]{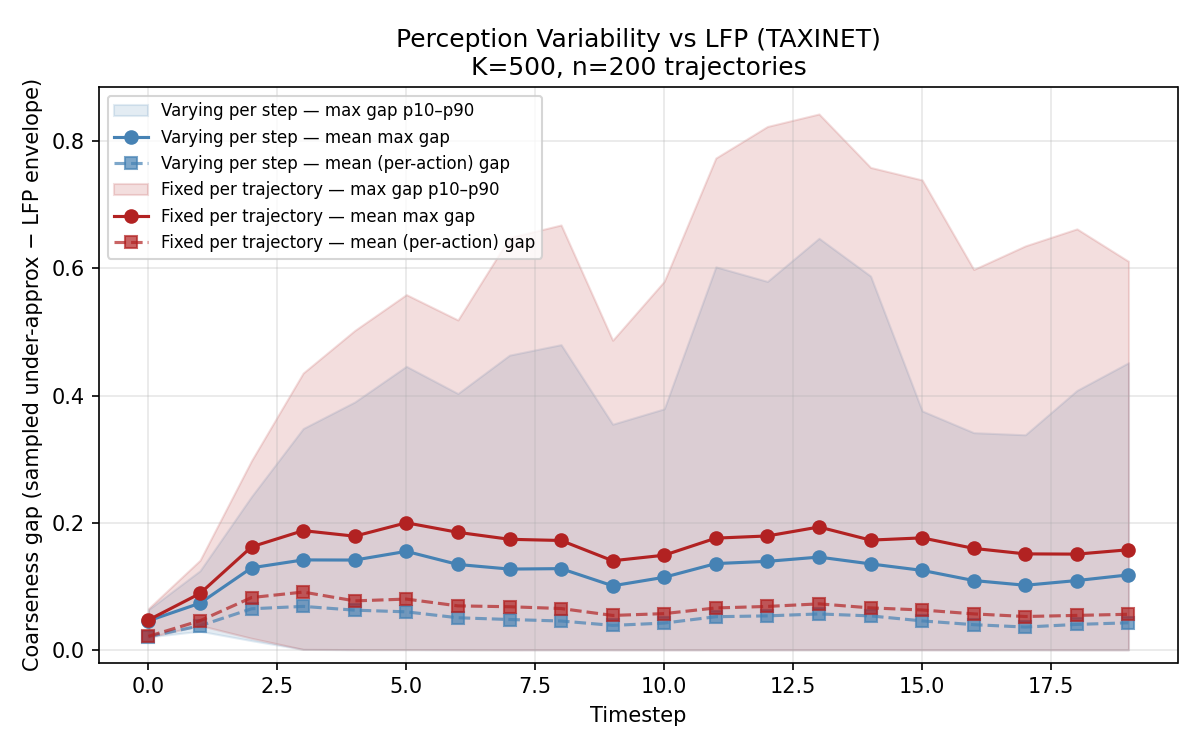}
\caption{\rev{LFP envelope conservativeness on TaxiNet, measured against two forward-sampled under-approximations of reachable beliefs. The \emph{time-varying} sampler resamples admissible perception probabilities at every step; the \emph{fixed-probability} sampler draws one admissible perception model and holds it fixed for the whole trajectory. Each curve plots the gap between its under-approximation and the LFP envelope, so each estimates how loose the envelope is relative to that sampler, and the spread between the curves estimates the additional envelope looseness attributable to protecting against time-varying perception.}}
\label{fig:perception-variability}
\end{figure}

\rev{Figure~\ref{fig:perception-variability} reports the envelope's conservativeness on 200 TaxiNet trajectories, measured against each of the two forward-sampled under-approximations. At each time step, the plotted value is the mean gap over the shield action evaluations performed at that step. These per-step mean gaps stay roughly in the $0.05$--$0.20$ range for the fixed-probability sampler and $0.05$--$0.15$ for the time-varying sampler, so the envelope's looseness is bounded but non-negligible. Aggregated over trajectories, the gap to the fixed-probability under-approximation has mean $0.541$, median $0.464$, and $90$th percentile $0.994$, while the gap to the time-varying under-approximation has mean $0.443$, median $0.357$, and $90$th percentile $0.980$; the corresponding mean per-action gaps are $0.0626$ and $0.0475$.}

\rev{Reading the two curves together separates two sources of envelope conservativeness. The time-varying under-approximation uses the same perception semantics as the LFP envelope, so its gap to the envelope estimates the looseness intrinsic to the LFP over-approximation of the reachable belief set. The fixed-probability under-approximation uses a stricter modeling assumption: one admissible perception kernel is reused throughout the trajectory, as may be more appropriate when the learned intervals represent uncertainty about stationary perception probabilities rather than step-to-step variability. Since this fixed-kernel reachable set is smaller, its gap to the same envelope is larger; the excess over the time-varying gap estimates the additional pessimism from using a shield construction that protects against time-varying \(Z_t\) on a problem better modeled by a single admissible \(Z\).}

\section{Inference-Time Analysis}
\label{sec:inference-timing}

\begin{figure}[t]
\centering
\includegraphics[width=0.98\columnwidth]{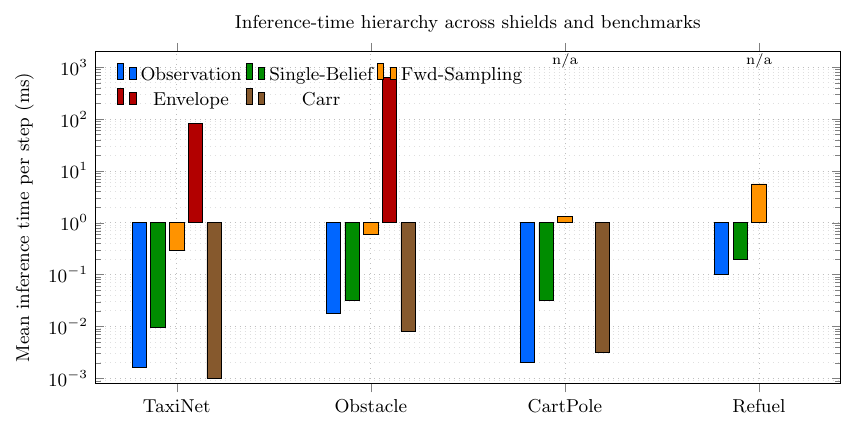}
\caption{Mean shield inference time per step on a log scale. Observation, Single-Belief, and Carr remain in the microsecond regime; Fwd-Sampling moves into the low-millisecond regime; and the LP-based Envelope is substantially slower, with CartPole and Refuel beyond the feasible range.}
\label{fig:timing}
\end{figure}

\orangechange{The fail-versus-stuck tradeoffs in Figs.~\ref{fig:summary-bars} and~\ref{fig:summary-bars-safe} are mirrored by the runtime hierarchy in Fig.~\ref{fig:timing}. These timings measure one shield inference call, including belief propagation and action filtering but excluding controller inference, environment stepping, and observation sampling, on an AMD EPYC 9454P. Observation, Single-Belief, and Carr are in the microsecond regime; Fwd-Sampling is in the millisecond-to-subsecond regime; and LP-based Envelope is two to five orders slower where feasible. On TaxiNet, for example, mean latency is \(9.4\,\mu\)s for Single-Belief, \(21.2\) ms for Fwd-Sampling, and \(83.1\) ms for Envelope. Refuel is the scalability boundary: Observation and Single-Belief remain sub-millisecond, Fwd-Sampling reaches \(383.9\) ms, and Envelope and Carr are infeasible.}

\orangechange{These timings sharpen the engineering interpretation. Single-Belief appears to be the strongest lightweight default in these benchmarks: nearly as cheap as Observation while retaining history and generally outperforming memoryless shielding. Fwd-Sampling is a useful middle ground on smaller models, much faster than Envelope on TaxiNet and Obstacle while recovering much of its safety benefit, but its cost grows steeply with state dimension and interval coverage. Envelope is best viewed as a low-frequency or smaller-model mechanism for hard aliased regimes, while Fwd-Sampling is the scalable approximation when exact envelope propagation is unavailable but more conservatism than Single-Belief is desired.}

\section{Proofs}
\label{app:proofs}

\begin{proof}[Proof of Theorem~V.1]
We first prove by induction on $t$ that $\mathcal{B}_t\subseteq\widehat{\mathcal{B}}_t$ for all $t$.

Base case: by construction, $\mathcal{B}_0\subseteq\widehat{\mathcal{B}}_0$.

Inductive step: assume $\mathcal{B}_t\subseteq\widehat{\mathcal{B}}_t$. Consider any exact posterior belief $b'\in\mathcal{B}_{t+1}$. By definition of the concrete reachable-belief set, $b'$ is obtained by applying one exact action-observation update from some prior belief in $\mathcal{B}_t$ using a perception uncertainty model in $\mathcal{Z}$. Since the induction hypothesis gives $\mathcal{B}_t\subseteq\widehat{\mathcal{B}}_t$, and since the propagated envelope is constructed to contain every such exact one-step posterior, we obtain $b'\in\widehat{\mathcal{B}}_{t+1}$. Hence $\mathcal{B}_{t+1}\subseteq\widehat{\mathcal{B}}_{t+1}$.

Therefore $\mathcal{B}_t\subseteq\widehat{\mathcal{B}}_t$ holds for every $t$.

Now fix any $Z\in\mathcal{Z}$. The posterior belief induced by history $h_t$ is some $b_t^Z\in\mathcal{B}_t$. Under that belief,
\[
\Pr_Z[a\in\Omega(s_t)\mid h_t]=\sum_{s\in S} b_t^Z(s)\mathbf{1}[a\in\Omega(s)]=\phi_a(b_t^Z).
\]
Therefore,
\[
\inf_{Z\in\mathcal{Z}}\Pr_Z[a\in\Omega(s_t)\mid h_t]
=
\inf_{b\in\mathcal{B}_t}\phi_a(b).
\]
Since $\mathcal{B}_t\subseteq\widehat{\mathcal{B}}_t$, minimizing over the larger set $\widehat{\mathcal{B}}_t$ can only decrease the value, so
\[
\inf_{b\in\mathcal{B}_t}\phi_a(b)\ge \inf_{b\in\widehat{\mathcal{B}}_t}\phi_a(b)=\underline{p}_t(a).
\]
If $a\in\widehat{\Xi}(h_t)$, then by definition $\underline{p}_t(a)\ge\beta$, which yields the claim.
\end{proof}

\begin{proof}[Proof of Theorem~V.2]
Condition on the event $\mathcal{E}_{\mathrm{corr}}$, so that the true perception uncertainty model satisfies $Z^\ast\in\mathcal{Z}$. For $t=0,\dots,H$, let
\[
E_t=\{s_0,\dots,s_t\in C\},
\]
so $\mathrm{Safe}_{0:H}=E_H$. By assumption, the initial support lies in $C$, hence $\Pr(E_0\mid \mathcal{E}_{\mathrm{corr}})=1$.

Fix any $t\in\{0,\dots,H-1\}$. On the event $E_t$, we have $s_t\in C$. Since actions are chosen from $\widehat{\Xi}$, the executed action $a_t$ satisfies $a_t\in\widehat{\Xi}(h_t)$. The Abstraction Soundness theorem then implies
\[
\Pr[a_t\in\Omega(s_t)\mid h_t,\mathcal{E}_{\mathrm{corr}}]\ge \beta.
\]
Whenever $s_t\in C$ and $a_t\in\Omega(s_t)$, Assumption~V.1 gives
\[
\Pr[s_{t+1}\in C\mid s_t,a_t]\ge \gamma.
\]
Combining these two bounds,
\[
\Pr[s_{t+1}\in C\mid h_t,E_t,\mathcal{E}_{\mathrm{corr}}]
\ge
\Pr[a_t\in\Omega(s_t)\mid h_t,\mathcal{E}_{\mathrm{corr}}]\cdot \gamma
\ge \beta\gamma.
\]
Therefore,
\[
\Pr(E_{t+1}\mid E_t,\mathcal{E}_{\mathrm{corr}})\ge \beta\gamma.
\]
Applying the chain rule over $t=0,\dots,H-1$ yields
\[
\Pr(E_H\mid \mathcal{E}_{\mathrm{corr}})
=
\prod_{t=0}^{H-1}\Pr(E_{t+1}\mid E_t,\mathcal{E}_{\mathrm{corr}})
\ge
(\beta\gamma)^H.
\]
Since $\mathrm{Safe}_{0:H}=E_H$, this is exactly the desired inner bound.

Finally, by construction of the confidence set, $\Pr(\mathcal{E}_{\mathrm{corr}})\ge \lambda$. Equivalently, with outer probability at least $\lambda$ over the random training dataset, the conditional closed-loop safety probability is at least $(\beta\gamma)^H$. This is exactly
\[
\Pr\!\left[\Pr\left[\mathrm{Safe}_{0:H}\right]\ge (\beta\gamma)^H\right]\ge\lambda.
\]
\end{proof}

\section{Categorical Foundations: Credal Coalgebraic Typing and Belief Semantics}
This section gives a \emph{categorical typing story} for interval perception uncertainty and explains how it directly motivates our choice of a convex (LP-friendly) abstract domain. The key observation is that interval constraints induce \emph{convex} sets of admissible distributions (credal sets), and belief-state semantics is governed by convex-algebra structure \cite{bonchi2021distribution}. Our abstract interpretation instantiates this structure computationally: it approximates credal belief sets by finitely presented convex sets and evaluates admissibility by extremizing linear functionals, which are the natural observables in a convex-algebraic setting.

\subsection{Distributions, Convex Powersets, and Credal Sets}
Let $\mathbf{Conv}$ denote the category of convex algebras and affine maps, and let $U:\mathbf{Conv}\to\mathbf{Sets}$ be the forgetful functor. For a finite set $X$, $\mathcal{D}(X)$ denotes the simplex of probability distributions on $X$, viewed as the free convex algebra on $X$ \cite{bonchi2021distribution}. Let $\mathcal{P}_c$ denote the convex-powerset construction on convex algebras. On underlying sets, $U\mathcal{P}_c(\mathcal{D}(X))$ is represented by the nonempty convex subsets of $\mathcal{D}(X)$; we write this carrier as $\mathcal{K}(X)$. Thus, in the discussion below, a ``credal set'' is precisely an element of $\mathcal{K}(X)$.

\subsection{A Coalgebraic Type for an \IPOMDP}
For a POMDP it is convenient to bundle transition and emission into a one-step \emph{joint} distribution over $(s',o)\in S\times O$. Under interval uncertainty, the perception uncertainty model is no longer a single kernel but a set $\mathcal{Z}$; in the present section it is most natural to view this as a \emph{nondeterministic perception uncertainty model}. Consequently, from a given $(s,a)$ we obtain a \emph{credal set} of admissible joint distributions. The point of this typing is not to replace the LP construction, but to isolate the structural reason the LP construction works: before normalization, the one-step semantics is a convex set of distributions, and shield queries depend only on linear functionals of beliefs. That is exactly the regime in which convex envelope abstractions and LP extremization are natural.

Concretely, for each $Z\in\mathcal{Z}$ define the joint kernel
\[
P_Z(s',o\mid s,a)\triangleq T(s,a,s')\,Z(o\mid s').
\]
The family $\{P_Z(\cdot\mid s,a):Z\in\mathcal{Z}\}\subseteq \mathcal{D}(S\times O)$ is convex because $\mathcal{Z}$ is a convex polytope (interval constraints intersected with simplex constraints) and $Z\mapsto P_Z(\cdot\mid s,a)$ is affine. Following the pattern of \cite{bonchi2021distribution}, it is useful to separate the abstract typed definition from its set-level concretization.

At the typed level, an \IPOMDP\ is presented by a coalgebra
\[
c:\;S\to \big(\mathcal{P}_c(\mathcal{D}(S\times O))\big)^A,
\]
whose codomain says that each state and action yields a convex choice of one-step joint distributions over next-state/observation pairs.
On underlying sets, the codomain of $c$ is represented by $\mathcal{K}(S\times O)$. But the concrete propagated object of interest is not this state-level map itself; it is the induced belief-state transformer obtained by lifting $c$ along the convex-algebra structure on $\mathcal{D}(S)$.

\subsection{Belief-Space Semantics and Convex Closure}
Beliefs live in $\mathcal{D}(S)$, which carries the free convex-algebra structure used in distribution-first presentations of probabilistic systems \cite{bonchi2021distribution}. In the same spirit as the belief-state transformer $c^\sharp:\mathcal{D}(S)\to (\mathcal{P}\mathcal{D}(S))^L$ for probabilistic automata, the state-level \IPOMDP coalgebra induces an exact concrete transformer on single beliefs,
\[
\hat c:\mathcal{D}(S)\to \big(\mathcal{K}(S)\big)^{A\times O},
\]
defined by
\[
\begin{aligned}
\hat c(b)(a,o)=\biggl\{\;
&\operatorname{Bayes}_o\!\left(\sum_{s\in S} b(s)\,P_Z(\cdot\mid s,a)\right)\\
&\biggm|\; Z\in\mathcal{Z},\ \Pr_Z(o\mid b,a)>0 \;\biggr\}.
\end{aligned}
\]
Thus the left-hand side changes from states to beliefs, exactly because propagation takes convex combinations of the state-indexed one-step semantics. This is the direct analogue of Bonchi's determinized belief-state transformer: a single current belief is mapped to a set of possible successor beliefs for each action-observation pair.

Equivalently, one may factor this through the intermediate joint-distribution credal set
\[
\mathsf{Joint}(b,a)=\left\{
\sum_{s\in S} b(s)\,P_Z(\cdot\mid s,a): Z\in\mathcal{Z}
\right\}\subseteq \mathcal{D}(S\times O),
\]
and then write $\hat c(b)(a,o)=\mathsf{Post}(b,a,o)$ for the posterior set obtained by conditioning $\mathsf{Joint}(b,a)$ on $o$, where $\Pr_Z(o\mid b,a)$ denotes the corresponding marginal observation probability. Affine prediction preserves convexity; the essential difficulty is that the normalization in $\operatorname{Bayes}_o$ can destroy convexity of $\hat c(b)(a,o)$.

For shielding, however, we propagate not one belief but a reachable set of beliefs. This gives the concrete set transformer
\[
\begin{gathered}
\mathcal{F}:\mathcal{K}(S)\to \big(\mathcal{K}(S)\big)^{A\times O},\\
\mathcal{F}(B)(a,o)=\operatorname{conv}\!\left(\bigcup_{b\in B}\hat c(b)(a,o)\right),
\end{gathered}
\]
where by $\mathcal{K}(S)$ we mean nonempty convex subsets of $\mathcal{D}(S)$. Conditioning on an observation $o$ can destroy convexity even when $B$ is convex, so the explicit convex closure in $\mathcal{F}$ is exactly the ``up to convex hull'' step used by our envelope semantics. Our method therefore propagates the envelope $\widehat{\mathcal{B}}_t$ rather than the exact generally nonconvex posterior set, so that linear functionals of beliefs (such as $\phi_a$) can be optimized soundly by LP.

\subsection{Normalization as a Cone Projection and the Role of Charnes--Cooper}
The Bayesian posterior update includes division by a normalizing constant. A useful perspective is to perform the update first in the positive cone of \emph{unnormalized} measures (or subdistributions), where the mapping is affine, and then project back to the simplex by normalization. This is exactly the computational role of our LFP lifting and Charnes--Cooper step: it is a controlled way to express normalization as linear-fractional constraints and reduce extremal posterior queries to LP without changing their optimal values.

\subsection{Abstract Interpretation Guided by Convex Algebra}
Abstract interpretation studies sound finite representations of typically infinite-state semantics by relating a concrete domain to an abstract domain through abstraction and concretization maps. In our setting, the concrete domain consists of reachable convex belief sets, while the abstract domain consists of finitely represented envelopes such as template polytopes. The abstraction map $\mathsf{abs}$ takes a concrete reachable belief set and returns a sound over-approximation in the chosen template domain; the concretization map $\mathsf{conc}$ interprets that abstract element back as the set of beliefs satisfying the template constraints. The abstract transformer $\mathcal{F}^\sharp$ is exactly our LFP/LP propagation pipeline.

From this perspective, the soundness condition
\[
\mathcal{F}(B)\subseteq \mathsf{conc}(\mathcal{F}^\sharp(\mathsf{abs}(B)))
\]
says that if $B$ is a concrete set of beliefs reachable before an action-observation update, then the beliefs actually reachable after the update are contained in the concretization of the abstract envelope produced by our propagation step. Template polytopes are one concrete realization of this recipe: they preserve affine prediction, admit conservative relaxations for the bilinear coupling introduced by uncertain observation likelihoods, and reduce extremal query evaluation to LP.

This viewpoint also clarifies portability. The method does not rely on the interval perception uncertainty model being rectangular. More generally, it applies to any perception uncertainty model represented by a convex polytope of admissible observation probabilities: the prediction step remains affine, the uncertain observation update can still be lifted into a convex belief-envelope computation, and runtime admissibility is still obtained by optimizing linear objectives over that envelope.

\end{document}